\newtheorem{theorem}{Theorem}
\newtheorem{proposition}{Proposition}
\newcommand{\E}{\mathbb{E}}
\DeclareMathOperator{\Var}{Var}
\newcommand{\follows}{\sim}
\DeclarePairedDelimiterX{\bgiven}[2]{[}{]}{#1\:\delimsize|\:#2}
\DeclarePairedDelimiterX{\setbuilder}[2]{\{}{\}}{#1\:\delimsize|\:#2}
\DeclarePairedDelimiterX{\pgiven}[2]{\lparen}{\rparen}{#1\:\delimsize|\:#2}
\DeclareMathOperator*{\argmin}{arg\,min}
\begin{document}
\twocolumn[

\aistatstitle{Boosted GFlowNets: Improving Exploration via Sequential Learning}

\aistatsauthor{Pedro Dall'Antonia$^{1}$ \, Tiago da Silva$^{2}$ \, Daniel A. de Souza$^{3}$ \vspace{6pt} \\ \bf César L. C. Mattos$^{4}$ \, Diego Mesquita$^{1,5}$\vspace{8pt}}

\runningauthor{Pedro Dall'Antonia, Tiago da Silva, Daniel A. de Souza, César L. C. Mattos, Diego Mesquita}
  
\aistatsaddress{ ${}^1$Getulio Vargas Foundation \,  ${}^2$MBZUAI \, ${}^3$University College London \, ${}^4$Federal University of Ceará \, ${}^5$2$\delta$ AI} ]

\begin{abstract}
Generative Flow Networks (GFlowNets) are powerful samplers for compositional objects that, by design, sample proportionally to a given non-negative reward. Nonetheless, in practice, they often struggle to explore the reward landscape evenly: trajectories toward easy-to-reach regions dominate training, while hard-to-reach modes receive vanishing or uninformative gradients, leading to poor coverage of high-reward areas.
We address this imbalance with Boosted GFlowNets, a method that sequentially trains an ensemble of GFlowNets, each optimizing a residual reward that compensates for the mass already captured by previous models.
This residual principle reactivates learning signals in underexplored regions and, under mild assumptions, ensures a monotone non-degradation property: adding boosters cannot worsen the learned distribution and typically improves it.
Empirically, Boosted GFlowNets achieve substantially better exploration and sample diversity on multimodal synthetic benchmarks and peptide design tasks, while preserving the stability and simplicity of standard trajectory-balance training. \looseness=-1
\end{abstract}

\section{Introduction}

Generative Flow Networks (GFlowNets) learn stochastic policies over directed acyclic state graphs to sample structured objects in proportion to unnormalized terminal rewards \citep{bengio2021}. 
This framework makes it possible to discover diverse candidates even when feedback is only available at the end of a trajectory, and has shown promise in scientific discovery and combinatorial design. 
In practice, however, training on large, multimodal targets is often bottlenecked by \emph{exploration}: the learner rapidly concentrates on easy-to-reach modes while leaving hard-to-reach ones severely undercovered, limiting both coverage and generalization. \looseness=-1

This imbalance emerges even when the forward policy nominally has full support. Once easy-to-reach modes are discovered, on-policy sampling concentrates data collection there, driving the training loss down in those regions while further reinforcing their visitation. Trajectories leading to hard-to-reach modes are visited only sporadically, and their gradients shrink in proportion to their low visitation probability, creating a self-reinforcing loop that amplifies mode imbalance.
Off-policy updates alleviate this feedback to some extent by decoupling data collection from the current policy, yet they still fail to allocate sufficient learning signal to rarely visited regions, yielding high-variance or weakly informative updates. As a result, both regimes struggle to recover remote modes, especially under long horizons or sparse rewards, despite theoretical convergence guarantees.

\begin{figure*}[h]
  \centering
  \newcommand{\pw}{0.235\linewidth}
  \begin{subfigure}[t]{\pw}\centering
    \includegraphics[width=\linewidth]{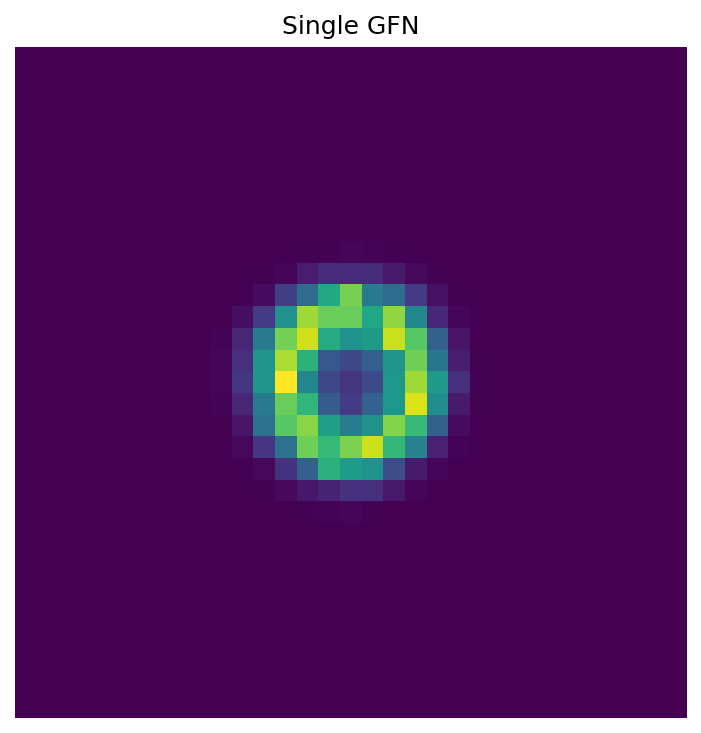}
    \caption{Single GFN}\label{fig:intu-baseline}
  \end{subfigure}\hfill
  \begin{subfigure}[t]{\pw}\centering
    \includegraphics[width=\linewidth]{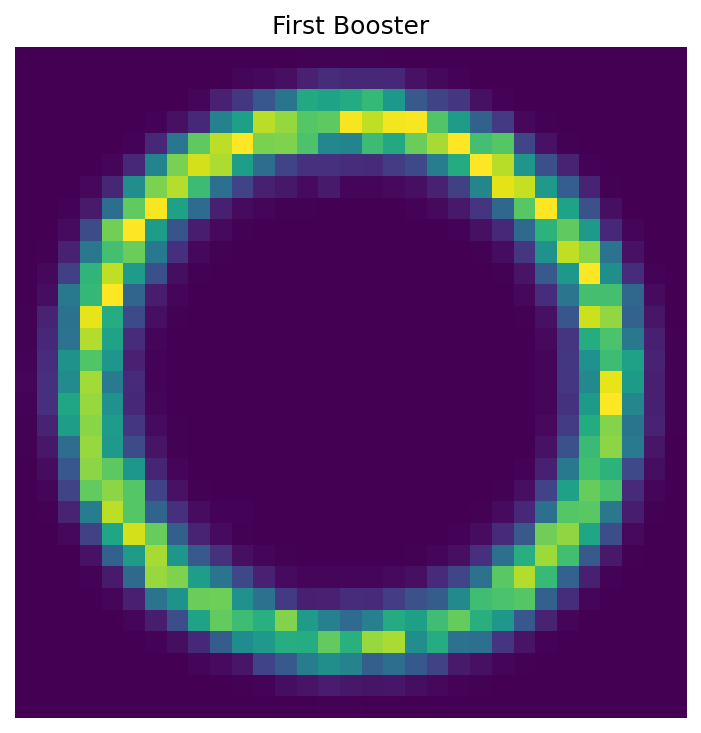}
    \caption{First Booster}\label{fig:intu-b1}
  \end{subfigure}\hfill
  \begin{subfigure}[t]{\pw}\centering
    \includegraphics[width=\linewidth]{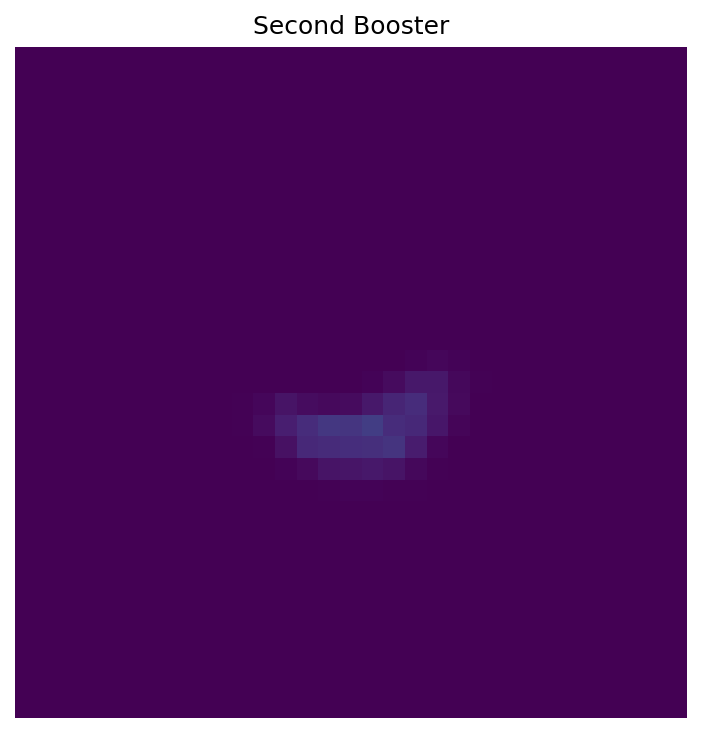}
    \caption{Second Booster}\label{fig:intu-b2}
  \end{subfigure}\hfill
  \begin{subfigure}[t]{\pw}\centering
    \includegraphics[width=\linewidth]{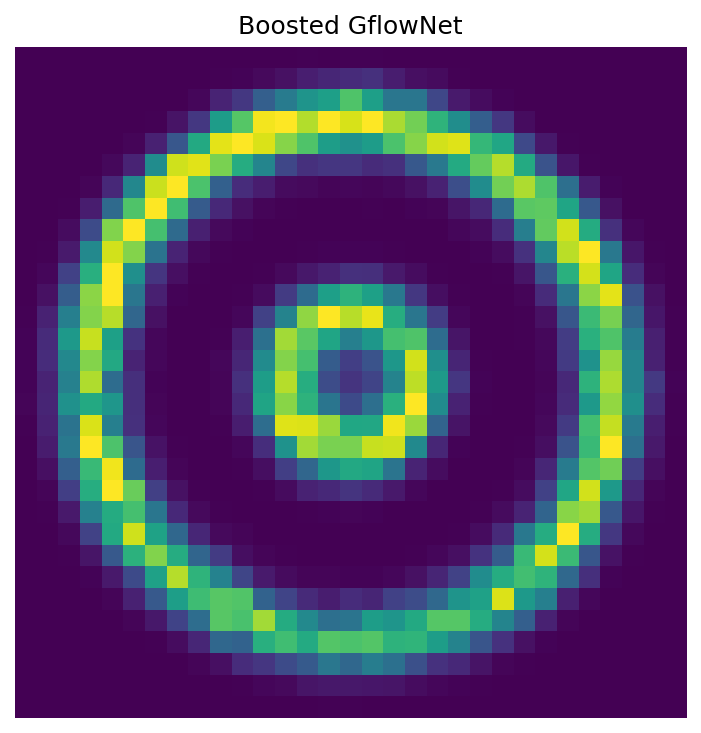}
    \caption{Final Ensemble}\label{fig:intu-ens}
  \end{subfigure}
  
  \caption{Illustration of Boosted GFlowNets on a multimodal target. 
  The single GFN covers only the nearest mode. 
  The first and second boosters progressively capture additional modes, while later boosters learn to allocate negligible flow. 
  Combined, the ensemble recovers the full target distribution.}
  \label{fig:intuition-evolution}
\end{figure*}

We propose \emph{Boosted GFlowNets} (BGFNs), a sequential training framework where each new stage is trained on the residual reward that remains after accounting for the distribution already induced by previous stages. 
This simple mechanism redistributes probability mass away from saturated regions and toward undercovered modes without modifying the core GFlowNet machinery or introducing auxiliary exploration modules. 
As illustrated in Figure~\ref{fig:intuition-evolution}, successive boosters progressively capture uncovered modes, while additional boosters introduced after convergence learn to allocate negligible flow, preserving correctness of the ensemble.

\noindent \textbf{Contributions.} Our main contributions are:
\begin{enumerate}
  \item We introduce \emph{Boosted GFlowNets}, a sequential training scheme that improves exploration by reallocating probability mass from easy-to-reach modes toward hard-to-reach ones, without altering the core GFlowNet framework.
  \item We prove that previously learned coverage remains stable, so additional boosters cannot degrade performance under mild assumptions; redundant boosters are effectively ignored.
  \item We validate the approach on multimodal synthetic benchmarks and on antimicrobial peptide generation with variable sequence lengths, showing that BGFNs achieve better mode coverage, diversity, and robustness compared to standard GFlowNet training.
\end{enumerate}

\vspace{-7 pt}

\section{Related Work}

\vspace{-5 pt}

GFlowNets face well-documented exploration bottlenecks on large, multimodal targets. Practical remedies span three main angles: \emph{(i) auxiliary exploration policies}, e.g., sibling-augmented schemes that collect novelty-seeking trajectories, often via intrinsic rewards such as Random Network Distillation, and re-label them off-policy for training the main learner \citep{madan2025-sa-gfn}; \emph{(ii) adaptive curricula}, where teacher--student mechanisms steer sampling toward under-covered regions identified by high loss or discrepancy signals \citep{kim2025-adaptive-teacher}; and \emph{(iii) loss-design perspectives}, where alternative regression losses (zero-avoiding families) are motivated through divergence analyses to bias training away from mode collapse and toward broader coverage \citep{hu2025-beyond-squared}. Scaling-oriented approaches decompose the state graph and train subgraphs asynchronously to enlarge visited regions under fixed budgets \citep{silva2025generalization}. Complementary theoretical works study conditions for correctness and distributional mismatch in finite-budget regimes, clarifying when coverage failures arise in practice \citep{silva2025when}.

Boosting Variational Inference \citep[BVI;][]{guo2016boosting, miller17boosting, locatello18boosting} generalizes classical boosting to probabilistic inference by iteratively constructing a mixture approximation to the target posterior. At each round, a new variational component is added to reduce the overall Kullback–Leibler divergence between the mixture and the true distribution. This procedure can be interpreted as performing functional gradient descent in the space of probability measures, where each component incrementally corrects the residual approximation error of the current mixture. Under mild smoothness and compactness assumptions on the variational family, BVI enjoys explicit convergence guarantees and systematically improves multimodal coverage by expanding the support of the approximation.

Boosted GFlowNets extend this idea to the domain of flow-based generative policies. Rather than updating a mixture density, each new GFlowNet is trained to correct the part of the target reward distribution that previous models have underrepresented. This iterative scheme redistributes probability mass from saturated regions toward underexplored modes, improving coverage while preserving the standard trajectory-balance formulation. By performing additive updates directly in flow space, BGFNs achieve monotone improvement in exploration and require no auxiliary policies or teacher modules.

\section{Background and Preliminaries}

Generative Flow Networks (GFlowNets, \citealt{bengio2021}) are probabilistic models designed to sample discrete objects $x \in \mathcal{X}$ with probability proportional to a non-negative reward function $R(x)$. 
Unlike standard generative models, which typically learn a normalized distribution, GFlowNets operate by training a pair of forward and backward policies over trajectories $\tau = (s_0 \to s_1 \to \dots \to s_T)$ in a directed acyclic graph (DAG) of states. 
The forward policy $P_F$ induces a distribution over terminal states, while the backward policy $P_B$ provides a stochastic inverse, together defining a system of ``flows'' that approximate $R(x)$. 
This formulation has been shown effective in domains where rewards are sparse, multimodal, or expensive to evaluate, making GFlowNets a promising tool for scientific discovery and structured generative modeling.

Formally, let $\Gamma$ denote the set of all valid trajectories from the initial state $s_0$ to any terminal state $x \in \mathcal{X}$. 
Each trajectory $\tau \in \Gamma$ is assigned a forward probability $P_F(\tau) = \prod_{t=0}^{T-1} P_F(s_{t+1}\mid s_t)$ and a corresponding backward probability $P_B(\tau \mid x) = \prod_{t=0}^{T-1} P_B(s_t \mid s_{t+1})$, where $s_T = x$. 
A GFlowNet is trained so that the marginal probability of sampling $x$ under the forward policy satisfies 
\begin{equation}
    P_F(x) \propto R(x), \quad \forall x \in \mathcal{X},
\end{equation}
with $Z = \sum_{x \in \mathcal{X}} R(x)$ acting as the normalizing constant. 
The objective of training is therefore to align the induced terminal distribution of $P_F$ with the unnormalized target defined by $R(x)$.

\paragraph{Training objectives.}
Several training objectives have been proposed to enforce the proportionality between $P_F(x)$ and $R(x)$, including \emph{Detailed Balance} (DB, \citealt{malkin2022trajectory}), \emph{Subtrajectory Balance} (SubTB, \citealt{madan2023learninggflownetspartialepisodes}). Our work builds on the widely used Trajectory Balance condition (TB, \citealt{malkin2022trajectory}). 
The TB objective introduces a scalar parameter $Z_{\theta}$ and enforces the following balance condition along full trajectories:

\begin{equation}
Z_{\theta}P_{F}(\tau) = R(x)P_{B}(\tau|x), \quad \forall \tau \in \Gamma
\label{eq:tb-condition}
\end{equation}

To achieve this, the model is trained by minimizing the squared log-ratio loss:

\begin{equation}
\mathcal{L}_{\text{TB}}(\tau) =
\Bigg(\log\frac{P_F(\tau)\, Z_\theta}{R(x)\, P_B(\tau \mid x)}\Bigg)^2
\label{eq:tb-loss}
\end{equation}

This loss enforces that the ratio of forward and backward trajectory probabilities, scaled by $Z_\theta$, matches the terminal reward $R(x)$. At convergence, minimizing this loss guarantees that the forward marginal satisfies $P_{F}(x)\propto R(x)$ provided that the support of the sampling policy covers all valid trajectories.

While the trajectory-balance objective provides a principled way to learn reward-proportional samplers, in practice its optimization can concentrate probability mass on a subset of easy-to-reach modes. In the next section, we revisit this limitation through the lens of flow decomposition, motivating the sequential training approach introduced in Boosted GFlowNets.

\section{Boosted GFlowNets}
\label{sec:method}

% \textcolor{red}{D: Tendo sub-seções, é importante explicar antes a estrutura pro revisor, rpa ele saber como cada parte contribui pro objetivo final --- ou o caminho do calvário}

\subsection{Easy- vs.\ Hard-to-Reach Rewards}
\label{sec:easy-hard}

Let $Q$ denote the data-collection distribution over trajectories (on- or off-policy), and let $A \subseteq \Gamma$ be the subset of trajectories that terminate in \emph{easy-to-reach} modes; write $B := A^{c}$ for \emph{hard-to-reach} modes. Even if the forward sampler nominally has full support, complex DAG topologies and long horizons can concentrate $Q$ on $A$, starving $B$ of samples. This induces a systematic exploration imbalance that balance-style objectives may not correct in practice, yielding weak signals for hard modes and sluggish coverage.

For any per-trajectory training loss $\mathcal{L}(\tau;\theta)$, the training signal decomposes as
\begin{multline}
\label{eq:loss-decomp}
\mathbb{E}_{\tau \sim Q}\!\left[\mathcal{L}(\tau;\theta)\right]
= Q(A)\,\mathbb{E}\!\left[\mathcal{L}(\tau;\theta)\,\middle|\, \tau\in A\right] \\
{}+ Q(B)\,\mathbb{E}\!\left[\mathcal{L}(\tau;\theta)\,\middle|\, \tau\in B\right].
\end{multline}
As training progresses, the easy region $A$ is typically fit first, so $\mathbb{E}[\mathcal{L}(\tau;\theta)\mid \tau\in A]\!\to\!0$, and the objective reduces to
\begin{equation}
\mathbb{E}_{\tau \sim Q}\!\left[\mathcal{L}(\tau;\theta)\right]
\;\longrightarrow\;
Q(B)\,\mathbb{E}\!\left[\mathcal{L}(\tau;\theta)\,\middle|\, \tau\in B\right].
\end{equation}
Hence the residual learning signal is \emph{entirely scaled} by $Q(B)$: when $Q(B)\!\ll\!1$, gradients from hard modes become negligible even if those modes remain under-covered. Off-policy corrections trade bias for variance; small $Q(B)$ implies either a small effective sample size or unstable large weights, so the useful signal from $B$ is easily drowned out.

A natural thought is to “just sample terminals uniformly,” but this is often infeasible or ineffective: the set of terminals can be combinatorial/implicit and exponentially large (\emph{enumeration barrier}), many terminal objects are invalid under domain constraints so producing valid terminals uniformly is itself nontrivial (\emph{validity/constraints}). Consequently, in many environments we are effectively constrained to samplers induced by the DAG topology and the current policy, which can trap learning in easy regions.

Equation~\eqref{eq:loss-decomp} thus underscores a core design requirement: without a mechanism that \emph{reallocates learning pressure} toward the hard region $B$, the contribution of those modes to optimization remains negligible.

The discussion above suggests that we should treat the mass already captured in easy regions as \emph{explained} signal and reallocate optimization toward what remains under-covered. Our key observation is that, under Trajectory Balance, the contribution of well-learned regions becomes effectively deterministic: the model induces a near-exact estimate of the terminal reward wherever flows are matched. We leverage this induced estimate as a control variate to define a \emph{residual target} for the next stage.

\subsection{Induced Reward and Residual Principle}
\label{sec:residual-principle}

Under Trajectory Balance (TB), regions that are already well learned produce near-deterministic per-trajectory estimates of the terminal reward. We make this precise by defining the induced estimator and stating a zero-variance property that motivates our residual training scheme.

Let $\Gamma_x$ denote the set of trajectories that terminate at $x$. Throughout this section, we work with trained (hence fixed) GFlowNets and omit parameter dependence from the notation. For any GFlowNet $\mathfrak g := (Z, P_F, P_B)$, terminal $x$, and trajectory $\tau\in\Gamma_x$, define the per-trajectory estimator
\begin{equation}
\widehat{R}_{\mathfrak g}(x;\tau)
:= Z\,\frac{P_F(\tau)}{P_B(\tau\mid x)}.
\end{equation}

The corresponding induced terminal estimate is
\begin{equation}
\label{eq:old-reward-expectation}
\widehat{R}_{\mathfrak g}(x)
  := \mathbb{E}_{\tau \sim P_B(\cdot\mid x)}
     \big[\widehat{R}_{\mathfrak g}(x;\tau)\big].
\end{equation}

\begin{theorem}[Zero variance at optimum]
\label{thm:Zero-variance}
Fix a trained GFlowNet $\mathfrak g := (Z, P_F, P_B)$, and define the support of its forward policy's terminal distribution as
\[
S := \{x \in \mathcal{X} : P_F(x) > 0\}.
\]
Assume that the Trajectory Balance loss has zero expectation,
\[
\mathbb{E}_{\tau \sim P_F}\!\big[\mathcal{L}_{TB}(\tau)\big] = 0.
\]
Further assume that, for every terminal $x \in S$, the backward policy $P_B(\cdot\mid x)$ and the restriction of the forward policy $P_F(\cdot)|_{\Gamma_x}$ are mutually absolutely continuous. Then, for every terminal $x \in \mathcal{X}$,
\[
\widehat{R}_{\mathfrak g}(x) = R(x)\,\mathbb{I}[x \in S]
\]
and
\[
\Var_{\tau \sim P_B(\cdot\mid x)}\!\big[\widehat{R}_{\mathfrak g}(x;\tau)\big] = 0,
\]
where $\mathbb{I}[\cdot]$ is the indicator function.
\end{theorem}

% This yields the residual principle: given a frozen stage (or an ensemble) that already matches parts of the target, its induced estimate—denoted $\widehat{R}(x)$, with a mild abuse of notation since $\widehat{R}$ may refer either to a single stage or to the ensemble predictor—can be used to define a residual target. The next stage is then trained to capture this residual, thereby reallocating learning pressure toward under-covered regions while preserving coverage in regions that are already matched. This principle is operationalized in our boosted loss, detailed next.

This yields the residual principle: given a frozen predictor that already matches parts of the target, its induced terminal estimate can be used to define a residual target. The next stage is then trained to capture this residual, thereby reallocating learning pressure toward under-covered regions while preserving coverage in regions that are already matched. This principle is operationalized in our boosted loss, detailed in the next subsection.

For an ensemble of trained GFlowNets $\mathcal{G}=\{\mathfrak g_1,\dots,\mathfrak g_N\}$, we aggregate the induced estimates of its members via
$$
\widehat{R}_{\mathcal{G}}(x) := \sum_{n=1}^N \widehat{R}_{\mathfrak g_n}(x),
$$
To lighten notation, once the relevant trained model (single stage or ensemble) is clear from context, we omit the subscripts and simply write $\widehat{R}(x)$ and $\widehat{R}(x;\tau)$.

\subsection{Boosted Trajectory Balance Loss}
\label{sec:boosted-loss}

Our sequential training framework is motivated by a simple observation: the overall objective is to enforce the Trajectory Balance (TB) condition (Eq.~\eqref{eq:tb-condition}) over the set of all possible trajectories, $\Gamma$.

When a GFlowNet minimizes the expected Trajectory Balance (TB) loss, $\mathbb{E}_{\tau \sim P_{\mathrm{F}}}[\mathcal{L}_{\text{TB}}(\tau)]$ to zero the TB condition (Eq.~\eqref{eq:tb-condition}) is satisfied for all trajectories within the support of the forward policy, $P_{\mathrm{F}}$.
Let $S$ be the set of terminal states covered by a converged past GFlowNet, and let $\Gamma_{S} \subset \Gamma$ be the set of trajectories ending in $S$.
The remaining learning objective for a new GFlowNet is thus to enforce the same balance condition on the complement set of trajectories, $\Gamma \setminus \Gamma_{S}$, which lead to the under-covered modes.

A principled way to enforce this new, restricted objective is to reformulate the target reward itself.
We can formally decompose the total target $R(x)$ into two components: the reward already captured by the \textbf{past model}, and the reward that remains.
\begin{equation}
    R(x) = \underbrace{R(x) \cdot \mathbb{I}[x \in S]}_{\text{Captured Reward}} + \underbrace{R(x) \cdot \mathbb{I}[x \notin S]}_{\text{Residual Reward}}
\end{equation}
As established in our analysis of Theorem~\ref{thm:Zero-variance}, the "Captured Reward" term is precisely the expected estimate produced by the trained GFlowNet, which we denote $\widehat{R}(x)$~(cf. Eq.~\eqref{eq:old-reward-expectation}).
This allows us to rewrite the equality above as:
\begin{equation}
    R(x) - \widehat{R}(x) = R(x) \mathbb{I}[x \notin S]
\end{equation}
Enforcing the TB condition on the complement is thus equivalent to enforcing a \emph{new} balance condition over the \emph{entire} set $\Gamma$, where the target is this $R(x) - \widehat{R}(x)$.

This decomposition suggests two equivalent balance conditions for the next stage, expressed in terms of the reward-induced  estimator $\widehat{R}_{\theta}(x;\tau) \!\!\;:= Z_\theta \,\frac{P_F^\theta(\tau)}{P_B^\theta(\tau \mid x)}.$
The new model can be trained to satisfy, either:

\begin{enumerate}
    \item \textbf{Target-Residual (TR):} the new stage directly matches the residual target,
    \begin{equation}
        \widehat{R}_{\theta}(x;\tau) = R(x) - \widehat{R}(x),
        \label{eq:tr-condition}
    \end{equation}

    \item \textbf{Flow-Additive (FA):} equivalently, its contribution adds to the frozen model to recover the full reward,
    \begin{equation}
        \widehat{R}_{\theta}(x;\tau) + \widehat{R}(x) = R(x),
        \label{eq:fa-condition}
    \end{equation}
\end{enumerate}

While derived here for a single past model, this same principle applies when $\widehat{R}$ represents the combined flow of a pre-existing ensemble.

These two strategies are unified into a single \textbf{Boosted Trajectory Balance Loss}, controlled by $\alpha \in [0,1]$ and the Monte Carlo budget $k$:
\begin{equation}
\mathcal{L}_{\text{boost}}^{(k)}(\tau)
= \left(
\log \left[
\frac{\widehat{R}_{\theta}(x;\tau) + \alpha \widehat{R}_k(x)}
     {R(x) - (1-\alpha)\widehat{R}_k(x)}
\right]
\right)^{2}.
\label{eq:boosted-loss}
\end{equation}
We omit the dependence on $k$ when it is clear from context or irrelevant to our analysis.

Here, $\widehat{R}_k(x)$ is a $k$-sample Monte Carlo estimate of the frozen model's induced terminal estimate $\widehat{R}(x)$ (Eq.~\eqref{eq:old-reward-expectation}). In the ensemble case, we draw $k$ trajectories from each member's backward policy, average within each member, and sum the resulting estimates:
\begin{equation}
\label{eq:mc-Rhat}
\widehat{R}_k(x) \!
:= \!  \sum_{n=1}^N\frac{1}{k}\sum_{i=1}^k \widehat{R}_{\mathfrak g_n}(x;\tau_{n,i}),
\tau_{n,i} {\sim} P_B^{(n)}(\cdot\mid x).
\end{equation}
% --- INÍCIO DA INSERÇÃO DO TEOREMA 2 ---
The following theorem formalizes the desirable properties of this loss at its boundaries.
\begin{theorem}[Correctness of the Boosted Loss]
\label{thm:boosted-loss-correctness}
For a terminal $x$, let $\widehat{R}(x)$ be an estimator of the target reward $R(x)$. Then:
\begin{enumerate}
    \item \textbf{(No Degradation)} If $\widehat{R}(x)=R(x)$, then the stationary points of $\mathcal{L}_{\text{boost}}$ (Eq.~\eqref{eq:boosted-loss}) have $Z_{\theta}=0$.
    \item \textbf{(Residual Focus)} If $\widehat{R}(x)=0$, then the stationary points of $\mathcal{L}_{\text{boost}}$ are the same as those of the standard TB loss, $\mathcal{L}_{\text{TB}}$ (Eq.~\eqref{eq:tb-loss}).
\end{enumerate}
\end{theorem}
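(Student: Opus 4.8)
The plan is to prove both statements by directly substituting the boundary value of $\hat{R}_{\text{old}}(x)$ into the boosted loss of Eq.~\eqref{eq:boosted-loss} and then analyzing where the resulting per-trajectory (hence expected) loss is stationary. Throughout I would use that the induced estimator factors as $\hat{R}_\theta(x;\tau) = Z_\theta\,\rho(\tau)$ with $\rho(\tau) := P_F^\theta(\tau)/P_B^\theta(\tau\mid x) > 0$ on the sampler's support, so that $\hat{R}_\theta(x;\tau)$ is a strictly increasing function of $Z_\theta \ge 0$ for each fixed $\tau$. This monotone reparametrization in $Z_\theta$ is what reduces a statement about all parameters $\theta$ to a one-dimensional stationarity condition.

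For the No-Degradation claim I would set $\hat{R}_{\text{old}}(x) = R(x)$ (assuming $\alpha>0$, so the loss is well defined). The numerator of Eq.~\eqref{eq:boosted-loss} becomes $\hat{R}_\theta(x;\tau) + \alpha R(x)$ and the denominator collapses to $R(x) - (1-\alpha)R(x) = \alpha R(x)$, giving
\begin{equation}
\mathcal{L}_{\text{boost}}(\tau) = \left(\log\!\left[1 + \frac{\hat{R}_\theta(x;\tau)}{\alpha R(x)}\right]\right)^{2}.
\end{equation}
Since $\hat{R}_\theta(x;\tau) \ge 0$, the bracketed term is $\ge 1$, so each per-trajectory loss is nonnegative and vanishes exactly when $\hat{R}_\theta(x;\tau)=0$. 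I would then differentiate with respect to $Z_\theta$ via the chain rule, obtaining $\partial_{Z_\theta}\mathcal{L}_{\text{boost}} = 2\rho(\tau)\log[1 + \hat{R}_\theta/(\alpha R)]/(\alpha R + \hat{R}_\theta)$, which is strictly positive for $Z_\theta>0$ and zero precisely at $Z_\theta=0$. Taking the expectation over $\tau$ preserves this sign, so the expected boosted loss is strictly increasing in $Z_\theta$ on $(0,\infty)$ with a unique stationary point (and global minimum, value $0$) at $Z_\theta=0$. Because $\hat{R}_\theta\equiv 0$ there, the loss is flat in the remaining parameters as well, so every stationary point of the full objective has $Z_\theta=0$.

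For the Residual-Focus claim I would set $\hat{R}_{\text{old}}(x)=0$, which holds for any $\alpha\in[0,1]$ since both $\alpha\hat{R}_{\text{old}}$ and $(1-\alpha)\hat{R}_{\text{old}}$ vanish. Eq.~\eqref{eq:boosted-loss} then reduces to $\bigl(\log[\hat{R}_\theta(x;\tau)/R(x)]\bigr)^2$, and substituting $\hat{R}_\theta(x;\tau)=Z_\theta\,P_F^\theta(\tau)/P_B^\theta(\tau\mid x)$ recovers the integrand of $\mathcal{L}_{\text{TB}}$ in Eq.~\eqref{eq:tb-loss} verbatim. Since the two losses agree as functions of $\theta$ for every $\tau$, they share identical gradients and hence identical stationary points.

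The main obstacle is conceptual rather than computational: it is the status of the point $Z_\theta=0$. In the standard GFlowNet parametrization $Z_\theta=\exp(\zeta)$ enforces strict positivity, so $Z_\theta=0$ is an infimum approached only as $\zeta\to-\infty$ rather than an attained interior critical point; the cleanest fix is to state the result on the closure $Z_\theta\ge 0$ and interpret ``stationary point'' in this limiting sense, noting that $\partial_\zeta \mathcal{L}_{\text{boost}} = Z_\theta\,\partial_{Z_\theta}\mathcal{L}_{\text{boost}}\to 0$ as $Z_\theta\to 0^{+}$. I would also flag the well-definedness conditions carried over from Theorem~\ref{thm:Zero-variance}: $\alpha>0$ and $R(x)>0$ for part~1, and mutual absolute continuity of $P_F$ and $P_B$ so that $\rho(\tau)$ is finite and strictly positive.
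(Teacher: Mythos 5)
Your proposal is correct and follows essentially the same route as the paper's proof: substitute the boundary value of $\hat{R}_{\text{old}}(x)$ into the loss, observe that the case $\hat{R}_{\text{old}}(x)=0$ reduces verbatim to $\mathcal{L}_{\text{TB}}$, and for $\hat{R}_{\text{old}}(x)=R(x)$ use monotonicity of $u\mapsto\log(1+u)$ together with strict positivity of $P_F^\theta(\tau)/P_B^\theta(\tau\mid x)$ on the support to force $Z_\theta=0$. Your explicit derivative computation in $Z_\theta$ and your remark that $Z_\theta=0$ is only a boundary/limiting stationary point under the usual $Z_\theta=\exp(\zeta)$ parametrization are refinements the paper glosses over, but they do not change the argument's structure.
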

% --- FIM DA INSERÇÃO DO TEOREMA 2 ---
\begin{figure*}[h]
  \centering
  %%% 8 Gaussians
  \begin{subfigure}[t]{0.32\textwidth}
    \centering
    \includegraphics[width=\linewidth]{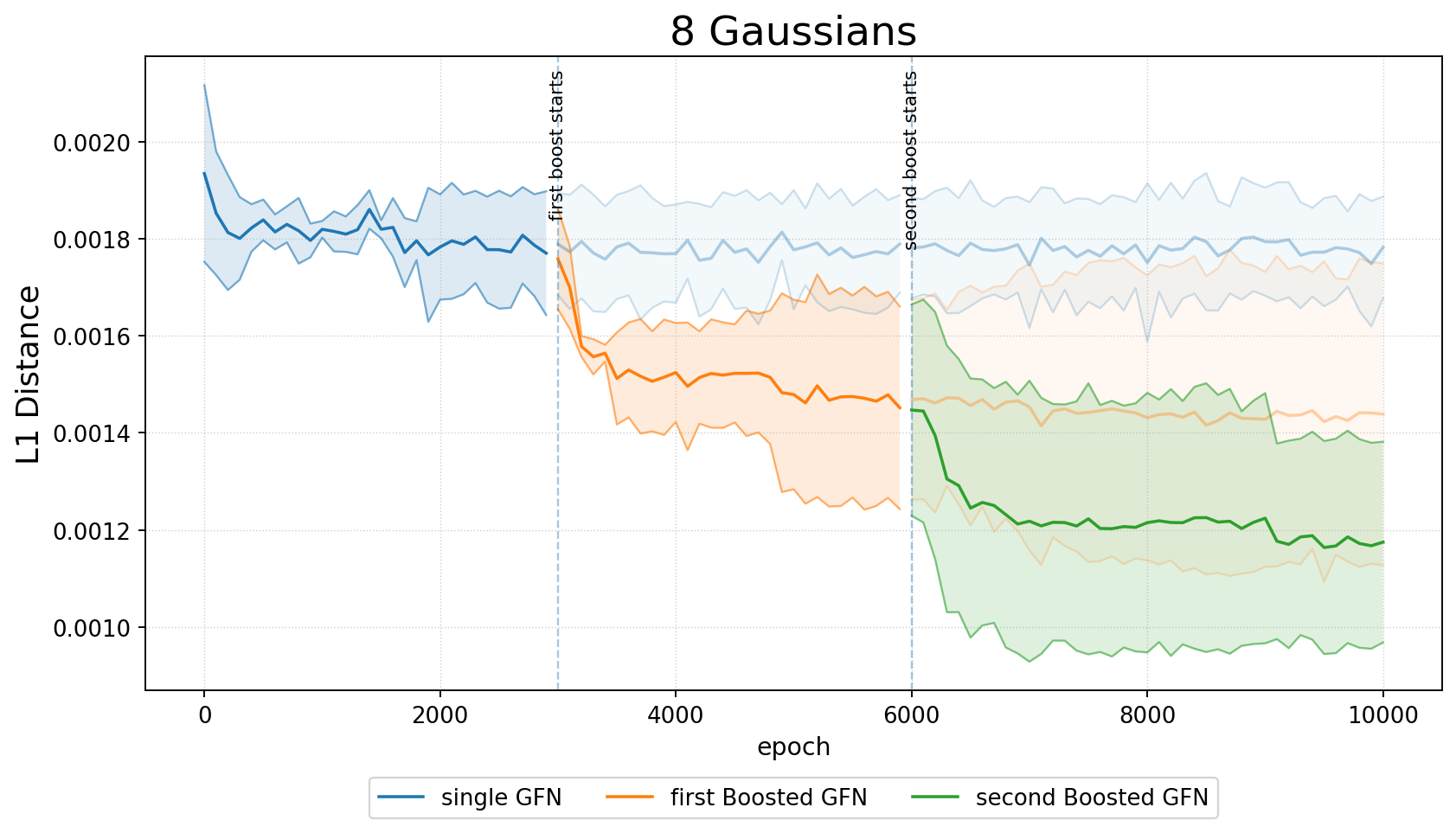}
    \caption{\textsc{Eight-Gaussians}}
    \label{fig:curve-8g}
  \end{subfigure}
  \hfill
  %%% Moons
  \begin{subfigure}[t]{0.32\textwidth}
    \centering
    \includegraphics[width=\linewidth]{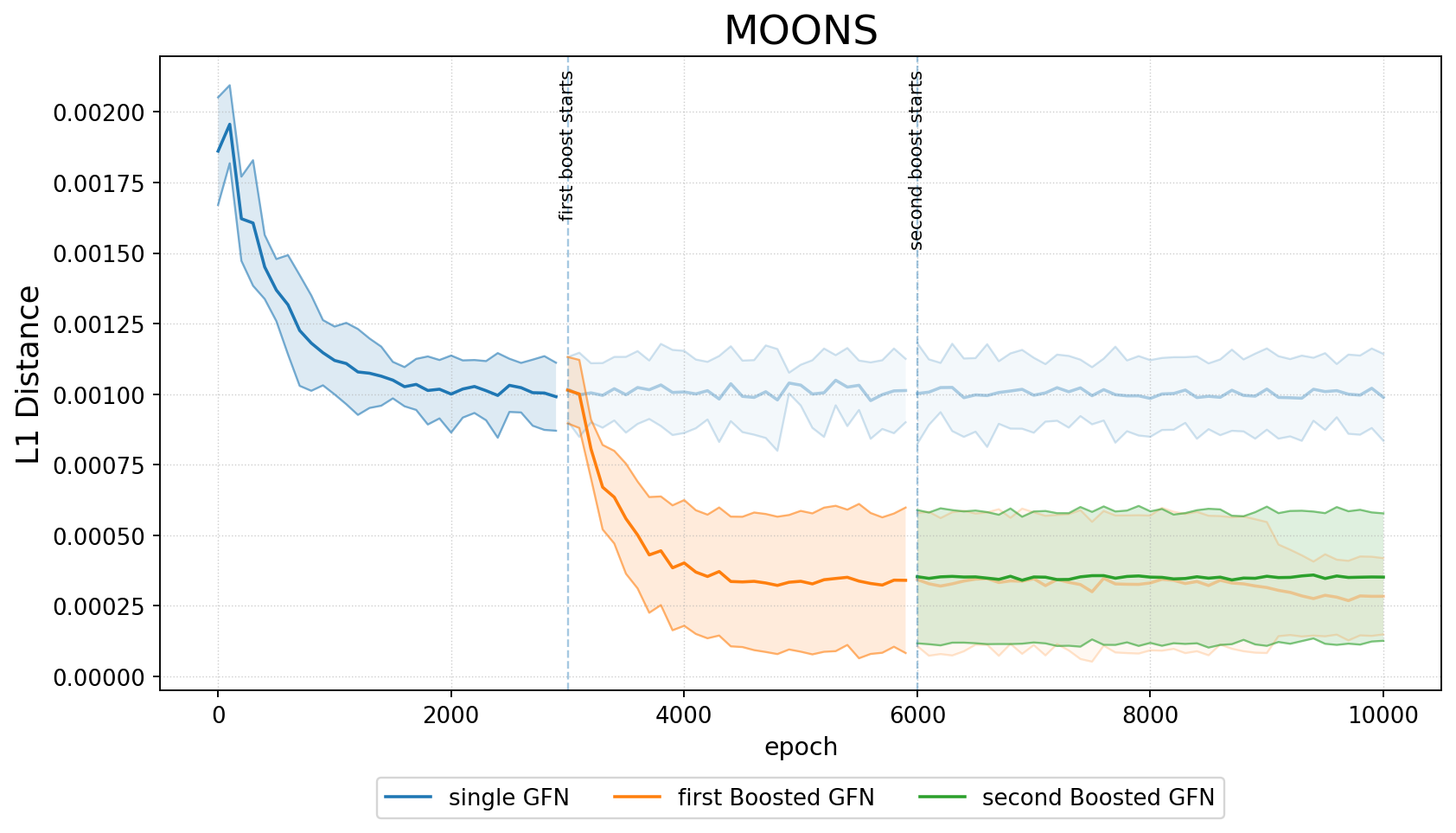}
    \caption{\textsc{Moons}}
    \label{fig:curve-moons}
  \end{subfigure}
  \hfill
  %%% Rings
  \begin{subfigure}[t]{0.32\textwidth}
    \centering
    \includegraphics[width=\linewidth]{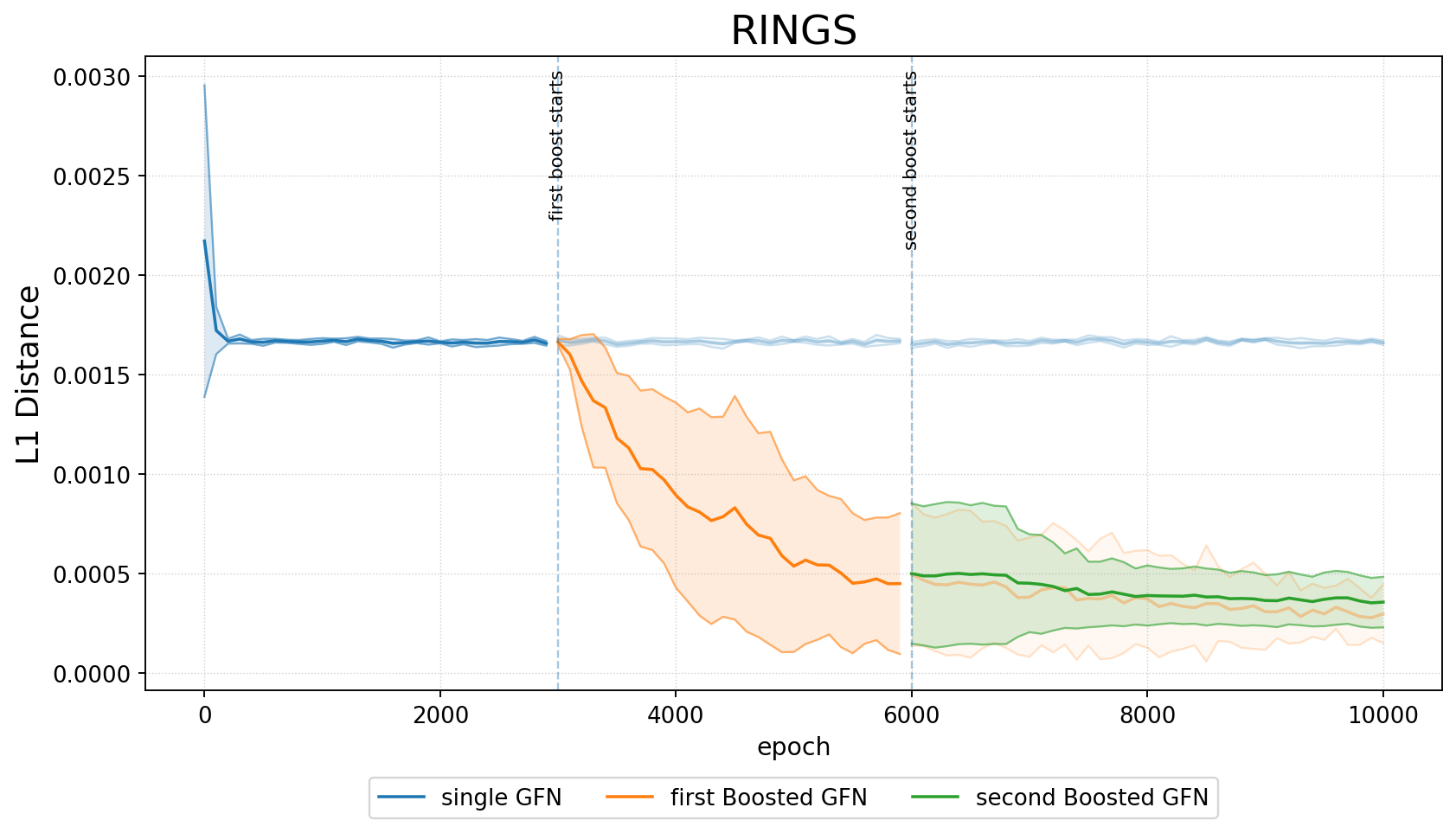}
    \caption{\textsc{Rings}}
    \label{fig:curve-rings}
  \end{subfigure}

  \caption{\textbf{Learning curves on synthetic targets.}
  Solid lines show the mean \(L_1\) across seeds; shaded bands denote \(\pm1\) std.
  Vertical dashed lines (3k and 6k epochs) indicate booster activations for BGFN(2) and BGFN(3).
  The single-GFN baseline (TB) plateaus once easy modes are fit; boosted stages keep reducing error by reallocating mass toward hard modes.}
  \label{fig:curves-synth}
\end{figure*}

Setting $\alpha\!=\!0.0$ recovers the Target-Residual (TR) variant (Eq.~\eqref{eq:tr-condition}), while setting $\alpha=1.0$ recovers the Flow-Additive (FA) variant (Eq.~\eqref{eq:fa-condition}).

The TR variant is mathematically \textbf{undefined} when $\widehat{R}(x) \ge R(x)$, as the denominator in Eq.~\eqref{eq:boosted-loss} becomes zero or negative.
This problematic case occurs precisely when the old ensemble has perfectly learned (or over-estimated) a mode. To mitigate this, we can choose $\alpha$ such that it remains as close to 0 while maintaining numerical stability.

% To mitigate this, we can derive a stable objective for this specific region.
% If we substitute the problematic $\hat{R}_{\text{old}}(x)$ with its converged target value, $R(x)$, directly into the general $\mathcal{L}_{\text{boost}}$ objective (Eq.~\eqref{eq:boosted-loss}), the loss simplifies:
% \begin{equation}
% \begin{split}
%     \mathcal{L}_{\nabla}(\tau) &= \left( \log \left[ \frac{\hat{R}_{\theta}(x;\tau) + \alpha R(x)}{R(x) - (1-\alpha)R(x)} \right] \right)^{2} \\
%     &= \left( \log \left[ \frac{\hat{R}_{\theta}(x;\tau)}{\alpha R(x)} + 1 \right] \right)^{2}.
% \end{split}
% \label{eq:nabla-loss-derivation}
% \end{equation}

% This simplified objective, which we refer to as $\mathcal{L}_{\nabla}(\tau)$ (cf. Eq.~\eqref{eq:nabla-loss-derivation}), is well-defined and still correctly drives the new model's flow $\hat{R}_{\theta}(x;\tau)$ to zero.
% We therefore use this as a practical safeguard for $\mathcal{L}_{\text{boost}}$ when its denominator is non-positive.

\subsection{Sampling from the ensemble}
\label{sec:sampling}

After training an ensemble of $N$ GFlowNets, $\{(Z_i, P_{i,\mathrm{F}}, P_{i,\mathrm{B}})\}_{i=1}^{N}$, a procedure is needed to sample a terminal state $x \in \mathcal{X}$ that is proportional to the total target reward $R(x)$.

The core principle of our framework is that each stage $i$ is trained to capture a residual component of the total reward.
Consequently, its learned partition function, $Z_i$, serves as an estimate of the total mass of that component.
The natural way to sample from the full ensemble is therefore to treat it as a mixture model, where the probability of selecting stage $i$ is proportional to its learned mass, $Z_i$.
The following theorem, proven in the appendix, formalizes that this sampling procedure correctly recovers the target distribution.

\begin{theorem}[Correctness of the Ensemble Sampling Process]
\label{thm:sampling-correctness}
Given an ensemble of $N$ GFlowNets that satisfy the boosted loss objective over a full support distribution, define the mixture probability of sampling a terminal state $x$ as:
\begin{equation}
    \hat{p}(x) \coloneqq \sum_{i=1}^{N} \frac{Z_{i}}{\sum_{j=1}^{N}Z_{j}} \sum_{\tau\in\Gamma_{x}} P_{i,\mathrm{F}}(\tau).
    \label{eq:ensemble-prob}
\end{equation}
Then, the induced probability mass is proportional to the target reward: $\hat{p}(x) \propto R(x)$.
\end{theorem}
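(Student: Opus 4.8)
The plan is to reduce the mixture probability $\hat p(x)$ to a telescoping sum of the per-stage induced rewards and show that this sum collapses to $R(x)$ up to a normalizing constant. The argument rests on two identities—one \emph{within} each stage and one \emph{across} stages—followed by a short assembly. Throughout I would write $P_{i,\mathrm F}(x) = \sum_{\tau\in\Gamma_x} P_{i,\mathrm F}(\tau)$ for the marginal terminal probability of stage $i$ and $\hat R_i(x)$ for its induced terminal estimate (Eq.~\eqref{eq:old-reward-expectation}).

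First I would establish the single-stage identity $Z_i\,P_{i,\mathrm F}(x) = \hat R_i(x)$. Each stage minimizes the boosted loss, whose zero-loss condition equates numerator and denominator in Eq.~\eqref{eq:boosted-loss}; a one-line cancellation shows that, \emph{independently of} $\alpha$, this forces the balance relation $\hat R_\theta(x;\tau) = R(x) - \sum_{j<i}\hat R_j(x)$ for every $\tau$ terminating at $x$ in the stage's support. This is exactly a Trajectory-Balance condition with the residual target $R_i^{\mathrm{res}}(x) := R(x) - \sum_{j<i}\hat R_j(x)$, so applying Theorem~\ref{thm:Zero-variance} with $R$ replaced by $R_i^{\mathrm{res}}$ yields the zero-variance property: the per-trajectory estimator $Z_i P_{i,\mathrm F}(\tau)/P_{i,\mathrm B}(\tau\mid x)$ is constant in $\tau$ and equals $\hat R_i(x)$. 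Rearranging to $Z_i P_{i,\mathrm F}(\tau) = \hat R_i(x)\,P_{i,\mathrm B}(\tau\mid x)$, summing over $\tau\in\Gamma_x$, and using that $P_{i,\mathrm B}(\cdot\mid x)$ is a valid distribution with $\sum_{\tau\in\Gamma_x}P_{i,\mathrm B}(\tau\mid x)=1$ delivers the identity; the mutual-absolute-continuity hypothesis ensures the forward-null trajectories contribute nothing to either side.

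Next I would prove the cross-stage telescoping identity $\sum_{i=1}^N \hat R_i(x) = R(x)$ for every terminal $x$ in the full support of the ensemble. By the previous step, on its support stage $i$ realizes $\hat R_i(x) = R_i^{\mathrm{res}}(x) = R(x) - \sum_{j<i}\hat R_j(x)$, while off its support $\hat R_i(x)=0$. I would then induct on the cumulative sum $C_i(x) := \sum_{j\le i}\hat R_j(x)$: the moment $x$ is first covered by some stage $i^\star$, one gets $C_{i^\star}(x)=R(x)$, and thereafter every residual target equals $R(x)-R(x)=0$, so the No-Degradation branch of Theorem~\ref{thm:boosted-loss-correctness} forces $\hat R_i(x)=0$ and the cumulative sum stays pinned at $R(x)$. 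The full-support hypothesis guarantees that each $x$ with $R(x)>0$ is covered by at least one stage, so $C_N(x)=R(x)$ everywhere.

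Finally I would assemble the two identities. Substituting $Z_i P_{i,\mathrm F}(x)=\hat R_i(x)$ into Eq.~\eqref{eq:ensemble-prob} gives $\hat p(x) = \bigl(\sum_{i}\hat R_i(x)\bigr)\big/\sum_j Z_j = R(x)\big/\sum_j Z_j$, hence $\hat p(x)\propto R(x)$; imposing $\sum_x \hat p(x)=1$ additionally identifies the ensemble mass $\sum_j Z_j$ with the true normalizer $\sum_x R(x)=Z$. I expect the main obstacle to be the cross-stage step: one must track the per-stage supports carefully and justify that the idealized at-optimum residual stays nonnegative (so the $\mathcal L_\nabla$ safeguard region $\hat R_{\mathrm{old}}(x)\ge R(x)$ is reached only with equality, where it still drives the new flow to zero), rather than accumulating sign errors or double-counting mass on overlapping supports.
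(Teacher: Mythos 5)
Your proof is correct and follows essentially the same route as the paper's: reduce the mixture $\hat{p}(x)$ to the per-stage terminal flows, show that each stage's zero boosted loss forces $Z_i P_{i,\mathrm{F}}(x)$ to equal its residual ($R(x)$ on the region that stage newly covers, zero elsewhere), and sum across stages using the full-support hypothesis. Your explicit telescoping induction on $C_i(x)=\sum_{j\le i}\hat{R}_j(x)$ is in fact a slightly more careful rendering of the cross-stage step, which the paper handles by summing indicators $R(x)\,\mathbf{1}_{\{x\in S_k\}}$ under an implicit disjoint-support assumption.
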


This theorem gives rise to a simple, practical two-step sampling algorithm:
\begin{enumerate}
    \item \textbf{Select a stage:} Sample an index $i \in \{1, ..., N\}$ from a categorical distribution with probabilities proportional to the learned partition functions, $p(i) \propto Z_i$.
    \item \textbf{Sample from the stage:} Execute the forward policy $P_{i,\mathrm{F}}$ of the selected stage to sample a terminal state $x$.
\end{enumerate}

% \subsection{Same $p_B$ for all GFlowNets}

\subsection{Relationship to Boosting VI}

Notably, Boosted GFlowNets are also tightly connected to the framework of boosting variational inference \citep[BVI;][]{guo2016boosting, miller17boosting, locatello18boosting}. To understand this relationship, we recall that BVI iteratively builds a mixture distribution by minimizing its Kullback-Leibler divergence to the target. That is, for each $t > 1$,
\begin{equation} \label{eq:bvis}
    q^{\star}, \beta^{\star} = \argmin_{q \in \mathcal{H}, \ \beta \in [0, 1]} \mathcal{D}_{\mathrm{KL}} \left[ (1 - \beta) \cdot p^{(t)} + \beta \cdot q || \pi \right],
\end{equation}

for a tractable family $\mathcal{H}$ of distributions. Then, $p^{(t + 1)} = (1 - \beta^{\star}) \cdot p^{(t)} + \beta^{\star} \cdot q^{\star}$. Clearly, each $p^{(t)}$ is a mixture of elements of $\mathcal{H}$ for $t > 1$ \citep{locatello2018boosting}. \looseness=-1

Drawing on this, we demonstrate below that the expected gradient of our boosted loss function ($L_{\text{boost}}$) under the ensemble distribution equals the gradient of the KL divergence in \Cref{eq:bvis} with respect to the parameters of $q$ (up to a multiplying constant). 

\begin{proposition} \label{prop:bvis}
    Under the notations of \Cref{eq:boosted-loss}, let $\widehat Z$ and $Z_{\theta}$ be the normalizing constants for $\widehat{R}(x)$ and $\hat{R}_{\theta}(x; \tau)$. 
    Also, let $\beta = \nicefrac{Z_{\theta}}{\widehat Z + Z_{\theta}}$ be the mixture weight in \Cref{eq:bvis}. 
    Then, define $\widehat p(\tau) \!\propto\! \widehat{R}(x) p_B(\tau | x)$ and $p_{\mathrm{tgt}}(\tau) \!=\! R(x) p_B(\tau | x)$ as the trajectory-level distributions induced by our current and target models, and let $p_M(\tau) = (1 - \beta) \cdot \widehat p(\tau) + \beta \cdot p_{F}(\tau)$ be the corresponding mixture distribution.
    In this scenario,
    \begin{equation}
        \mathbb{E}_{\tau \sim p_M} \left[ \nabla_{\theta} L_{\mathrm{boost}}(\tau) \right] = 2 \cdot \nabla_{\theta} D_{\mathrm{KL}} [ p_M(\tau) || p_{\mathrm{tgt}}(\tau) ] % . 
    \end{equation}
    when $\alpha$ is set to $1$ in \Cref{eq:boosted-loss}.  % Similarly, % .
    % \begin{equation}
    %     \mathbb{E}_{p_M} \left[ \frac{\partial}{\partial \beta} L_{\mathrm{boost}}(\tau) \right] = 2 \cdot \frac{\partial}{\partial \beta} D_{\mathrm{KL}} [ p_M(\tau) || p_{\mathrm{tgt}}(\tau) ]. 
    % \end{equation}
\end{proposition}

Importantly, \Cref{prop:bvis} expands the connection between GFlowNets and VI \citep{malkin2023gflownets} and provides a formal motivation for our method's name. \looseness=-1

\section{Experiments}
\label{sec:experiments}

% Requires: \usepackage{booktabs,multirow,xcolor}
% Compact table with sci_short format: e.g., 2e-4 (not 2.4e-04).
\begingroup
\setlength{\tabcolsep}{3.0pt}%
\renewcommand{\arraystretch}{0.95}%
\begin{table*}[h]
    \caption{Last-epoch $L_1$ distance. Means (1 sig. digit) and std (1 sig. digit) shown as mean(std) using compact scientific notation. All runs use the same size ($n=6$).
    \label{tab:l1_tasks_ultracompact_sciShort}
  Abbrev.: SG = Single GFlowNet; B2--FA = Boosted (2 models, flow-additive); B3--FA = Boosted (3 models, flow-additive).
  Lower is better. \textbf{Bold} = best per noise $\varepsilon$. \textcolor{red}{Red} = best overall within each task.}

  \centering
  \footnotesize
  \begin{tabular}{llcccccc}
    \toprule
    \multicolumn{2}{l}{\textbf{Task / Model}} & \multicolumn{6}{c}{\(\varepsilon\)} \\
    \cmidrule(lr){3-8}
     &  & 0 & 0.1 & 0.2 & 0.3 & 0.4 & 0.5 \\
    \midrule
\multirow{3}{*}{\textbf{8g}} & SG & 1.8e-3(1.7e-5) & 1.8e-3(1e-4) & 1.2e-3(1.6e-4) & \textbf{1e-3(2.9e-4)} & 1.2e-3(2.8e-4) & 1.7e-3(5.9e-4) \\
 & B2--FA & 1.6e-3(2e-5) & 1.4e-3(3.1e-4) & 1.2e-3(2.4e-4) & 1.1e-3(1.6e-4) & 1.3e-3(2.4e-4) & 1.5e-3(1e-4) \\
 & B3--FA & \textbf{1.3e-3(1.6e-4)} & \textbf{1.2e-3(2.1e-4)} & \textbf{\textcolor{red}{9.8e-4}}(2.1e-4) & 1e-3(8.7e-5) & \textbf{1.2e-3(1.6e-4)} & \textbf{1.4e-3(7.4e-5)} \\
\midrule
\multirow{3}{*}{\textbf{rings}} & SG & 1.7e-3(1.2e-5) & 1.7e-3(1.1e-5) & 1.7e-3(9.3e-6) & 1.7e-3(2.3e-5) & 1.6e-3(3.8e-5) & 1.6e-3(3.7e-5) \\
 & B2--FA & \textbf{\textcolor{red}{2.6e-4}}(6.6e-5) & \textbf{3e-4(1.5e-4)} & \textbf{9.4e-4(8e-4)} & 9.6e-4(7.6e-4) & \textbf{9.7e-4(7.6e-4)} & 1.2e-3(6.8e-4) \\
 & B3--FA & 3.5e-4(2e-4) & 3.5e-4(1.3e-4) & 9.5e-4(7.9e-4) & \textbf{8e-4(7.1e-4)} & 9.7e-4(7.5e-4) & \textbf{1.2e-3(6.7e-4)} \\
\midrule
\multirow{3}{*}{\textbf{moons}} & SG & 1e-3(5.9e-6) & 1e-3(1e-5) & 9.9e-4(1.5e-4) & 1.1e-3(1.9e-4) & 1.1e-3(1.2e-4) & 1.2e-3(1.1e-4) \\
 & B2--FA & \textbf{1.9e-4(1.9e-5)} & \textbf{\textcolor{red}{1.9e-4}}(3.3e-5) & \textbf{2.8e-4(1.4e-4)} & 4.2e-4(1e-4) & 6.3e-4(1.1e-4) & 6e-4(9.1e-5) \\
 & B3--FA & 2.1e-4(3.8e-5) & 2.2e-4(3.4e-5) & 3.5e-4(2.3e-4) & \textbf{3.4e-4(8.1e-5)} & \textbf{4e-4(2.5e-4)} & \textbf{2.9e-4(1.8e-4)} \\
\midrule
    \bottomrule
  \end{tabular}

\end{table*}

\endgroup % não precisa .tex no Overleaf

Our experimental evaluation aims to answer three main questions:
\begin{description}
    \setlength{\itemsep}{0pt}
    \item[\textbf{(Q1)}] Can Boosted GFlowNets (BGFNs) successfully explore modes that are hard to reach due to the inductive bias of standard GFlowNets towards easy-to-reach regions?
    \item[\textbf{(Q2)}] Do BGFNs correctly learn the target distribution without introducing bias, and how is this affected by on-policy vs. off-policy training and estimator noise?
    \item[\textbf{(Q3)}] When additional boosters are instantiated after the distribution is already well fitted, can BGFNs effectively ignore them without degrading performance?
\end{description}
To address these questions, we consider four tasks. Three of them are synthetic multimodal environments on a two-dimensional grid, where the GFlowNet is trained to sample proportionally to a reward landscape defined over a $31 \times 31$ lattice. The fourth task is a real-world application in computational biology, namely the generation of antimicrobial peptides (AMPs). Together, these benchmarks allow us to probe exploration challenges, sensitivity to noise, and practical utility in a scientific discovery setting.

\subsection{Synthetic Multimodal Environments}
\label{sec:synthetic}

In the grid tasks, the environment is a $31 \times 31$ lattice centered at the origin. Each episode consists of exactly 30 steps, where at each step the agent can move up, down, left, or right, or remain in place. Terminal states correspond to positions on the grid, and the reward function is multimodal, with three distinct landscapes (EIGHT-GAUSSIANS, MOONS, RINGS) chosen to test the limitations of standard GFlowNets~\ref{fig:curves-synth}. The main challenge stems from the combinatorial number of trajectories leading to each state, which makes distant modes extremely difficult to reach in practice.

\paragraph{Training protocol.}
We organized the experiments into three settings: (i) a single GFlowNet trained for 10,000 epochs (SINGLE), (ii) a baseline trained for 3,000 epochs followed by a first booster trained for an additional 7,000 epochs (BGFN-2), and (iii) the same baseline and first booster up to 6,000 epochs, after which a second booster is introduced and trained for 4,000 epochs (BGFN-3). To control for stochasticity, all models were trained with identical random seeds across conditions.

\subsubsection{Results: BGFNs Overcome Exploration Bottlenecks and are Robust}

Our results on the synthetic environments show a clear and consistent pattern, visualized in Figure~\ref{fig:intuition-evolution} Figure~\ref{fig:curves-synth} and Table~\ref{tab:l1_tasks_ultracompact_sciShort}:

\begin{itemize}
    \item \textbf{Single GFlowNets fail to explore.} Across all three multimodal landscapes, the single GFlowNet (SG) baseline quickly converged to easy-to-reach modes but systematically failed to cover distant, high-reward modes. This is evident in the L1 distance between the distribution induced on terminal states by the GFlownet and the target distribution (Figure~\ref{fig:curves-synth}, blue line), which plateau early as the model fails to discover remote modes. \looseness=-1

    \item \textbf{BGFNs successfully explore hard modes.} By contrast, BGFNs consistently discovered and allocated probability mass to the hard-to-reach modes. The L1 distance shows the first booster (BGFN-2, orange line) breaking the baseline's plateau and continuing to reduce (Figure~\ref{fig:curves-synth}. This process is illustrated in Figure~\ref{fig:intuition-evolution}, where the single GFN captures only the inner ring, and the first booster correctly learns the residual (the outer ring). \looseness=-1
    
    \item \textbf{BGFNs are robust to redundancy.} Importantly, the introduction of a second, unnecessary booster (BGFN-3) did not degrade performance. Once the distribution was already well captured by the first booster, the second booster (Figure~\ref{fig:curves-synth}, green line) effectively learned to allocate negligible flow, leaving the ensemble performance unchanged. This confirms our claim (Q3) that BGFNs can adaptively "ignore" redundant models, avoiding destabilization.
\end{itemize}

\begin{figure*}[h]
  \centering
  %%% Left
  \begin{subfigure}[t]{0.44\textwidth}
    \centering
    \includegraphics[width=\linewidth]{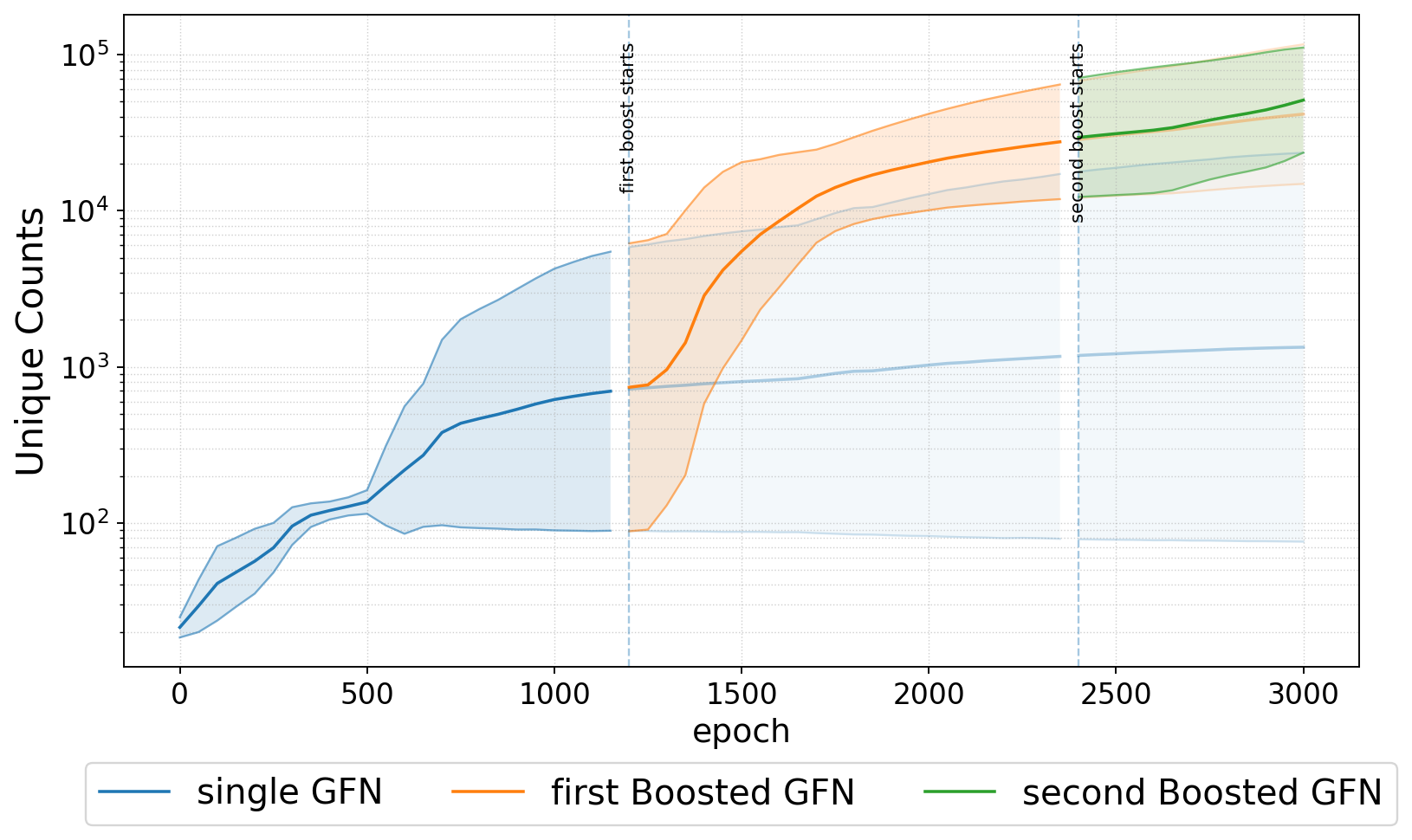}
    \label{fig:curve-peptides1}
  \end{subfigure}
  %%% Right
  \begin{subfigure}[t]{0.44\textwidth}
    \centering
    \includegraphics[width=\linewidth]{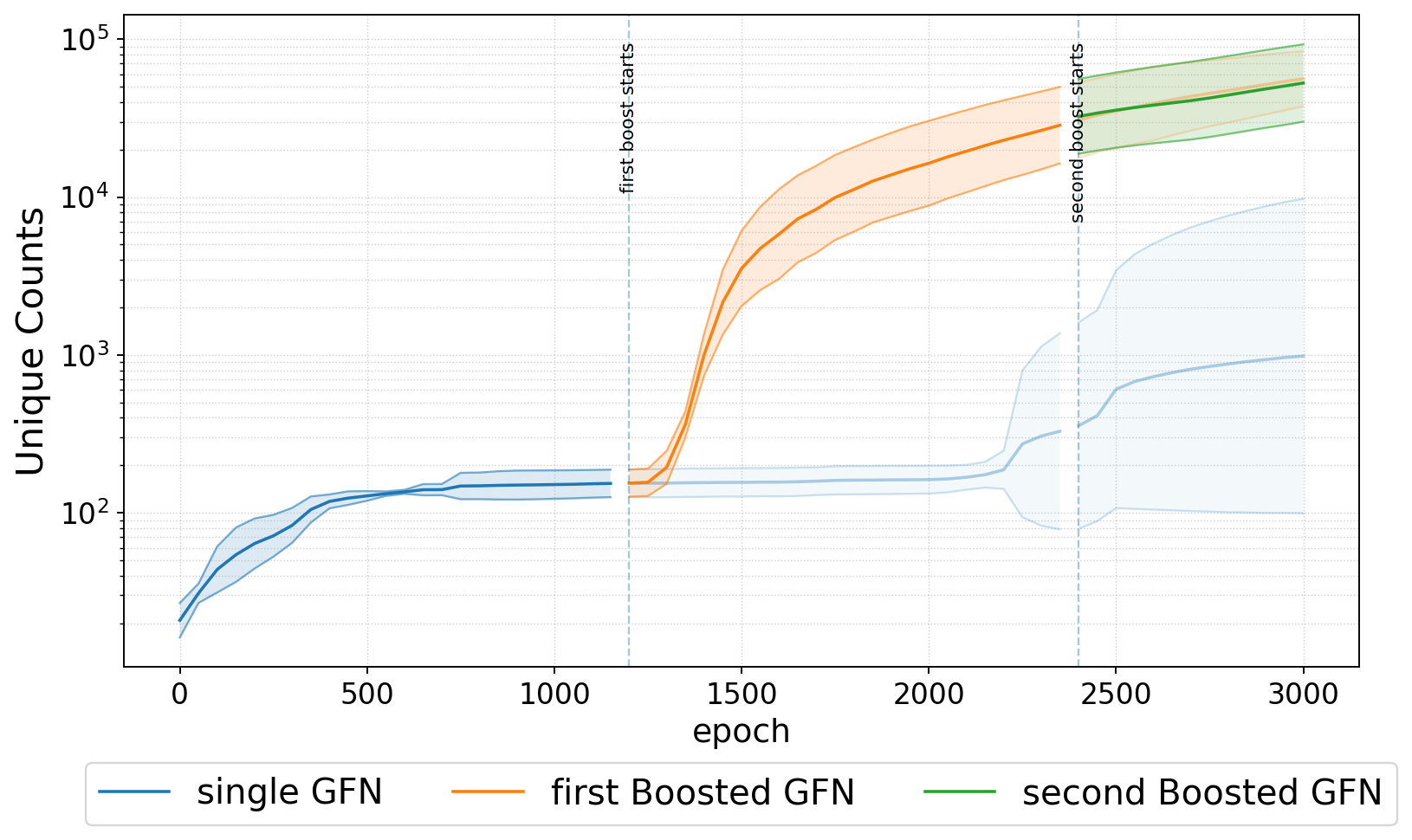}
    \label{fig:curve-peptides2}
  \end{subfigure}
\caption{\textbf{Unique predicted-resistant peptides across noise levels.}
Every 50 epochs we sample 1{,}000 peptides from the current policy and
\emph{accumulate} the number of \emph{unique} sequences whose predicted activity
is at least $0.94$ for at least one microorganism. Curves show the cumulative
count over epochs for a single GFN (blue) and its boosted counterparts (orange,
green); shaded regions denote $\pm 1$ standard deviation across seeds. The
$y$-axis is logarithmic. Vertical dashed lines mark activation of the first and
second boosters. \textbf{Left:} off-policy $\varepsilon=0.2$; \textbf{Right:} off-policy
$\varepsilon=0.3$.}
  \label{fig:uniq-peptides}
\end{figure*}

\begin{table*}[h]
  \centering
  \caption{Unique predicted-resistant peptides ($\mu \pm \sigma$, $n=5$).
  Abbreviations: SG = Single GFlowNet; B2 = Boosted GFlowNet (2 models); 
  B3 = Boosted GFlowNet (3 models).
  TR = target-residual ($\alpha=0.0$); FA = flow-additive ($\alpha=1.0$).
  Bold = best per noise level $\varepsilon$; \textcolor{red}{Red} = best overall in the table.}
  \label{tab:uniq-summary}
  \begin{tabular}{lcccc}
    \toprule
    \textbf{Model} & \multicolumn{4}{c}{$\varepsilon$} \\
    \cmidrule(lr){2-5}
                   & 0.0 & 0.1 & 0.2 & 0.3 \\
    \midrule
    SG    & 32.6 $\pm$ 11.4 & 70362.8 $\pm$ 104681.2 & 14872.0 $\pm$ 25536.5 & 4958.0 $\pm$ 6594.3 \\
    B2-TR & \textbf{2672.0 $\pm$ 5870.2} & \textbf{\textcolor{red}{113658.2} $\pm$ 78878.8} & 59536.6 $\pm$ 46818.5 & \textbf{59787.4 $\pm$ 23864.5} \\
    B2-FA & 41.0 $\pm$ 16.3 & 72602.4 $\pm$ 104523.1 & 6066.6 $\pm$ 8068.0 & 163.2 $\pm$ 39.8 \\
    B3-TR & 3414.2 $\pm$ 7527.6 & 108205.8 $\pm$ 85854.4 & \textbf{63343.4 $\pm$ 41844.5} & 60335.2 $\pm$ 37423.7 \\
    B3-FA & 45.2 $\pm$ 11.0 & 71714.0 $\pm$ 103854.7 & 24663.2 $\pm$ 20909.8 & 167.0 $\pm$ 37.9 \\
    \bottomrule
  \end{tabular}
\end{table*}

\subsection{Antimicrobial Peptide (AMP) Discovery}

We evaluate BGFNs on a real combinatorial design task: the generation of antimicrobial peptides (AMPs) \cite{trabucco2022designbenchbenchmarksdatadrivenoffline, pirtskhalava_dbaasp}. 
Unlike prior work that fixes sequence length, here peptides have \emph{variable length} from 1 to 10 amino acids. 
We train a Random Forest proxy to predict the probability of antimicrobial activity and define the GFlowNet reward as the \emph{logit} of this probability. 
To avoid overweighting a small set of very-high-scoring sequences, we apply a 94\% cutoff and clip the log-reward at zero for sequences above this threshold (i.e., all peptides with predicted activity $\geq 0.94$ share the same terminal reward). 
This choice preserves ranking signal below the threshold while mitigating oversampling to a few modes above it.

\paragraph{Training protocol (AMPs).}
All models are trained for 3{,}000 epochs. 
We activate \textsc{Booster~1} at epoch 1{,}200 and \textsc{Booster~2} at epoch 2{,}400, keeping earlier stages frozen thereafter. 
We evaluate both on-policy and off-policy training under estimator noise levels $\varepsilon \in \{0.1, 0.2, 0.3\}$. 
In addition, we compare two loss instantiations for incorporating past rewards: the \emph{flow-additive} and the \emph{target-residual} variants. 
As in the synthetic setup, we reuse the same random seeds across conditions to ensure that observed differences arise from the training regime rather than luck.

\subsubsection{Results: BGFNs Discover Substantially More Unique High-Reward Peptides}

\begin{itemize}
    \item \textbf{BGFNs find more unique peptides.} In all experimental conditions, BGFNs discovered substantially more unique peptide sequences than standard GFlowNets. This advantage held consistently across all noise levels and for both on-policy and off-policy training. As shown in Figure~\ref{fig:uniq-peptides} and Table~\ref{tab:uniq-summary}, this amounts to orders of magnitude more high-confidence sequences.

    \item \textbf{Target-Residual (TR) excels at exploration.} Among the two BGFN instantiations, the Target-Residual (TR) variant achieved broader exploration and discovered more unique peptides. We attribute this to its more aggressive loss gradients that push probability mass into under-covered regions, which is beneficial when rewards are sparse.
\end{itemize}

\section{Conclusion, Limitations, and Broader Impact}

\paragraph{Conclusion.}
We introduced Boosted GFlowNets (BGFNs), a framework for sequentially training ensembles of GFlowNets. 
Our results demonstrate that BGFNs are \emph{safe to use}: new boosters can be added to improve exploration without degrading the performance of previously trained components. 
This property provides a principled way of enhancing exploration in challenging environments, where the topology of the state space creates strong biases towards easy-to-reach modes, even under off-policy training. 
Across synthetic multimodal benchmarks and antimicrobial peptide discovery, BGFNs consistently improved mode coverage and robustness, validating the framework as a reliable extension of standard GFlowNet training.

\paragraph{Limitations.}
The main limitation of BGFNs is computational: each additional booster requires sampling from the backwards policy of all previously trained models in the ensemble. 
This increases training time on the order of $\mathcal{O}(KN)$, where $K$ denotes the number of trajectories sampled for a terminal node and $N$ the number of GFlowNets in the ensemble. 
An important exception is \emph{sequence} environments with a unique backward path, in which the forward pass specifies the single valid backward trajectory; in practice, this allows reusing the same backward path across ensemble members rather than resampling for each model. 
In our experiments, we also mitigated cost by sampling only once per terminal state, but scaling to very large ensembles may still require additional engineering or approximations.

\paragraph{Broader Impact.}
BGFNs provide a general mechanism for continued training and integration of multiple GFlowNets, opening the door to ensembles that combine different architectures, training regimes, or even loss functions. 
Although we did not explore these extensions in this work, the framework naturally supports them. 
This flexibility suggests that BGFNs could serve as a unifying tool for generative modeling in diverse domains, ranging from scientific discovery tasks to applications in reinforcement learning and probabilistic inference, provided appropriate care is taken in designing the ensemble and its reward structure.

% \subsubsection*{Acknowledgments}
% Use unnumbered third level headings for the acknowledgments. All
% acknowledgments, including those to funding agencies, go at the end of the paper.

\section*{Acknowledgements}

We acknowledge the support by the Fundação Carlos Chagas Filho de Amparo à Pesquisa do Estado do Rio de Janeiro (FAPERJ) (SEI-260003/020348/2025, SEI-260003/020694/2025) and the Conselho Nacional de Desenvolvimento Científico e Tecnológico (CNPq) (404336/2023-0, 305692/2025-9). 

\bibliography{main}

%%%%%%%%%%%%%%%%%%%%%%%%%%%%%%%%%%%%%%%%%%%%%%%%%%%%%%%%%%%%
\section*{Checklist}

% %%% BEGIN INSTRUCTIONS %%%
% The checklist follows the references. For each question, choose your answer from the three possible options: Yes, No, Not Applicable.  You are encouraged to include a justification to your answer, either by referencing the appropriate section of your paper or providing a brief inline description (1-2 sentences). 
% Please do not modify the questions.  Note that the Checklist section does not count towards the page limit. Not including the checklist in the first submission won't result in desk rejection, although in such case we will ask you to upload it during the author response period and include it in camera ready (if accepted).

% \textbf{In your paper, please delete this instructions block and only keep the Checklist section heading above along with the questions/answers below.}
% %%% END INSTRUCTIONS %%%

\begin{enumerate}

  \item For all models and algorithms presented, check if you include:
  \begin{enumerate}
    \item A clear description of the mathematical setting, assumptions, algorithm, and/or model. [Yes] – Section 3 defines the formal objectives, balance conditions, and algorithms.
    \item An analysis of the properties and complexity (time, space, sample size) of any algorithm. [Yes] – The Limitations section discusses computational cost $\mathcal{O}(KN)$ of the ensemble.
    \item (Optional) Anonymized source code, with specification of all dependencies, including external libraries. [Yes] – Code and instructions will be provided in the supplementary material.
  \end{enumerate}

  \item For any theoretical claim, check if you include:
  \begin{enumerate}
    \item Statements of the full set of assumptions of all theoretical results. [Yes] – Theorems explicitly state assumptions on support and continuity.
    \item Complete proofs of all theoretical results. [Yes] – Proofs are included in Appendix~B.
    \item Clear explanations of any assumptions. [Yes] – Assumptions are explained in theorems and in the background section.
  \end{enumerate}

  \item For all figures and tables that present empirical results, check if you include:
  \begin{enumerate}
    \item The code, data, and instructions needed to reproduce the main experimental results (either in the supplemental material or as a URL). [Yes] – Code, data, and instructions will be released in the supplementary material.
    \item All the training details (e.g., data splits, hyperparameters, how they were chosen). [Yes] – Training protocols, epochs, seeds, and loss variants are described in Section 5 and Appendix.
    \item A clear definition of the specific measure or statistics and error bars (e.g., with respect to the random seed after running experiments multiple times). [Yes] – Results are reported as mean ± standard deviation over 5 seeds.
    \item A description of the computing infrastructure used. (e.g., type of GPUs, internal cluster, or cloud provider). [Yes] – All experiments were run on CPUs only.
  \end{enumerate}

  \item If you are using existing assets (e.g., code, data, models) or curating/releasing new assets, check if you include:
  \begin{enumerate}
    \item Citations of the creator If your work uses existing assets. [Yes] – The AMP dataset and proxy model are cited.
    \item The license information of the assets, if applicable. [Yes] – The AMP dataset is publicly available; license details will be clarified in the supplementary material.
    \item New assets either in the supplemental material or as a URL, if applicable. [Not Applicable]
    \item Information about consent from data providers/curators. [Not Applicable]
    \item Discussion of sensible content if applicable, e.g., personally identifiable information or offensive content. [Not Applicable]
  \end{enumerate}

  \item If you used crowdsourcing or conducted research with human subjects, check if you include:
  \begin{enumerate}
    \item The full text of instructions given to participants and screenshots. [Not Applicable]
    \item Descriptions of potential participant risks, with links to Institutional Review Board (IRB) approvals if applicable. [Not Applicable]
    \item The estimated hourly wage paid to participants and the total amount spent on participant compensation. [Not Applicable]
  \end{enumerate}

\end{enumerate}

\clearpage

\onecolumn

% Página inicial limpa (opcional)
\thispagestyle{empty}
\pagestyle{empty}

% Título no estilo AISTATS
\aistatstitle{Boosted GFlowNets: Improving Exploration via Sequential Learning\\
Supplementary Materials}

\appendix
\section{Notation and objectives}

\paragraph{Trajectories and terminal states.}
A trajectory is denoted by \(\tau\). For each terminal state \(x\), let \(\Gamma_x\) be the set of all trajectories that terminate at \(x\).
Throughout, when an expectation or integrand depends on \(\tau\), the symbol \(x\) denotes the terminal state of that \(\tau\) (i.e., \(\tau\in\Gamma_x\)).

\paragraph{Forward and backward laws.}
We write \(P_F(\tau)\) for the probability of \(\tau\) under the forward sampler (policy and environment dynamics), and
\(P_B(\tau\mid x)\) for the probability of \(\tau\) under the backward sampler \emph{conditioned} on the terminal \(x\).
All statements below are made for \(\tau\in\Gamma_x\).
When using frozen ensemble members (indexed by \(k\)), we denote their normalizers by \(Z_k\) and their
forward/backward trajectory laws by \(P_F^{(k)}(\tau)\) and \(P_B^{(k)}(\tau\mid x)\).

\paragraph{Rewards and normalizer.}
Each terminal \(x\) has a strictly positive, unnormalized reward \(R(x)>0\); we work in log-space via \(\log R(x)\).
The scalar \(Z_\theta>0\) denotes the (learned) normalizer for the current model parameters \(\theta\).

\paragraph{Trajectory-level estimator (single member).}
Given \(\tau\in\Gamma_x\), the trajectory-balance estimator of the terminal reward is
\begin{equation}
\label{eq:s1-rhat}
\widehat{R}_\theta(x;\tau)\;:=\; Z_\theta\,\frac{P_F(\tau)}{P_B(\tau\mid x)}.
\end{equation}

\paragraph{Trajectory-balance (TB) objective.}
The TB loss for a single trajectory \(\tau\in\Gamma_x\) is the squared log discrepancy
\begin{equation}
\label{eq:s1-tb}
\mathcal{L}_{\mathrm{TB}}(\tau)\;=\;\Big(\,\log \widehat{R}_\theta(x;\tau)\;-\;\log R(x)\,\Big)^2.
\end{equation}
Training minimizes the empirical expectation of \(\mathcal{L}_{\mathrm{TB}}\) over trajectories sampled from the forward process.

\paragraph{\(\alpha\)-Boosted objective}\label{sec:s1-boosting}
Let \({R}_\theta(x;\tau)\) be the current trainable estimator.
For a frozen ensemble \(\mathcal{F}\), we define the induced reward terminal estimate as
\[
\widehat R(x)\;:=\;\sum_{k\in\mathcal{F}}\;\E_{\tau\sim P_B^{(k)}(\cdot\mid x)}\!\Big[Z_k \frac{P_F^{(k)}(\tau)}{P_B^{(k)}(\tau \mid x)}\Big],
\]
Given \(\alpha\in[0,1]\), we define the boosted loss
\begin{equation}
\label{eq:s1-boost}
\mathcal{L}_{\mathrm{boost}}^{(k)}(\tau)
\;=\;
\left(
\log\frac{R_\theta(x;\tau)\;+\;\alpha\,\widehat R_k(x)}{\,R(x)\;-\;(1-\alpha)\,\widehat R_k(x)\,}
\right)^2.
\end{equation}

Here, $\widehat{R}_k(x)$ is a $k$-sample Monte Carlo estimate of the frozen model's induced terminal estimate $\widehat{R}(x)$. To simplify notation we omit k and assume only one Monte Carlo sample from each model in the ensemble. 

\emph{Positivity of the denominator.}
To ensure \(R(x)-(1-\alpha) \widehat R(x)>0\) uniformly, we replace \(\alpha\) by a per-terminal state clamped value
\[
\alpha_t(x)\;=\;\mathrm{clip}\!\Big(\alpha,\ \alpha_{\min}(x),\ 1\Big),
\qquad
\alpha_{\min}(x)\;=\;
\begin{cases}
0, & \widehat R(x)=0,\\[4pt]
1-\dfrac{R(x)-\delta}{\widehat R(x)}, & \widehat R(x)>0,
\end{cases}
\]
with tiny \(\delta>0\) (on the order of machine epsilon). Using \(\alpha_t(x)\) in \eqref{eq:s1-boost} guarantees
\[
R(x)\;-\;(1-\alpha_t(x))\,\widehat R(x)\ \ge\ \delta\ >0
\]

\emph{Special cases.} If \(\widehat R\equiv 0\), then \(\mathcal{L}_{\mathrm{boost}}(\tau)=(\log R_\theta-\log R)^2\), i.e., it reduces to TB.
If \(\widehat R(x)=R(x)\), then \(\mathcal{L}_{\mathrm{boost}}(\tau)=\big(\log(R_\theta/(\alpha R)+1)\big)^2\).
(See Section~\ref{sec:proofs} for the accompanying statements and proofs.)

\paragraph{Expectations and variance.}

We use
\(\mathbb{E}_{\tau\sim P_F}[\cdot]\) for forward-sampled expectations,
\(\mathbb{E}_{\tau\sim P_B(\cdot\mid x)}[\cdot]\) for backward-sampled expectations conditional on \(x\), and
\(\operatorname{Var}[\cdot]\) for the corresponding variances.
All equalities and inequalities are understood to hold almost surely with respect to the relevant sampling law.

\paragraph{Estimating \texorpdfstring{$\widehat{R}$}{Rhat} via importance sampling}
\label{sec:estimator}

We define the implicit reward (the terminal \emph{flow} in standard GFlowNet terminology) as
\begin{equation}
\label{eq:rhat-sum}
\widehat{R}(x)
\;\stackrel{\mathrm{def}}{=}\;
Z \sum_{\tau:\,\tau\to x} P_F(\tau)
=Z\,P_F(x).
\end{equation}
Without enumerating paths, rewrite it as an expectation under any reverse-path distribution $P_B(\cdot\mid x)$ supported on $\{\tau:\tau\to x\}$:
\begin{equation}
\label{eq:rhat-IS}
\widehat{R}(x)
=
\sum_{\tau:\,\tau\to x} Z P_F(\tau)\frac{P_B(\tau \mid x)}{P_B(\tau \mid x)}
=
\E_{\tau\follows P_B(\cdot\mid x)}\!\Big[\tfrac{Z\,P_F(\tau)}{P_B(\tau\mid x)}\Big].
\end{equation}

\section{Grid Environment:}\label{sec:grid}

\paragraph{State, time, and domain.}
We use a square, symmetric grid with explicit time such that every possible state is reachable:
\[
\mathcal{X}=\{-W,\ldots,W\}\times\{-W,\ldots,W\},\qquad
s=(x,y,t),\quad t\in\{0,\ldots,T\},\quad T=2W.
\]
Where the policy observes $\big(x,\;y,\;t/T\big)$.

\paragraph{Actions and dynamics.}
The action set is
\[
\mathcal{A}=\{\rightarrow,\leftarrow,\uparrow,\downarrow,\varnothing\}
=\{(1,0),(-1,0),(0,1),(0,-1),(0,0)\}.
\]
A \emph{forward} step (enabled if \(t<T\)) updates
\[
(x',y',t')=\big(x+\Delta x,\;y+\Delta y,\;t+1\big).
\]
A \emph{backward} step (enabled if \(t>0\)) updates
\[
(x',y',t')=\big(x-\Delta x,\;y-\Delta y,\;t-1\big).
\]

\paragraph{Valid-action masks.}
Invalid logits are set to a large negative constant before softmax.
The \emph{forward mask} \(M_F(s,a)=1\) iff \(t<T\) and the next position is in bounds.
The \emph{backward mask} \(M_B(s,a)=1\) iff
(i) \(t>0\); (ii) the previous position is in bounds; and
(iii) the predecessor is reachable at time \(t-1\):
\[
|x-\Delta x|+|y-\Delta y|\;\le\;t-1 .
\]

\paragraph{Policies.}
We use separate neural policies for forward and backward. Each is a masked MLP over \([x,y,t/T]\) with geometric Fourier time features \((f_k=2^k)\), producing 5 logits for the actions; masking precedes the softmax.

\paragraph{Reward families (\texttt{8g}, \texttt{rings}, \texttt{moons}).}
Rewards depend only on position \((x,y)\). We define a density \(\rho(x,y)\ge 0\) and set
\[
\log R(x,y)=\log\!\big((1-\lambda)\,\rho(x,y)+\lambda\big),\qquad \lambda\ll 1,
\]
to avoid zeros.

\emph{Eight Gaussians (8g).} With anchors \(a_m=(R\cos\theta_m, R\sin\theta_m)\), \(\theta_m=2\pi m/8\), \(m=0,\dots,7\):
\[
\rho(x,y)=\sum_{m=0}^{7}\exp\!\left(-\tfrac{1}{2\sigma^2}\,\big\|(x,y)-a_m\big\|^2\right).
\]

\emph{Rings.} For radii \(\{r_\ell\}\) with width \(\sigma_r\) and optional weights \(w_\ell\):
\[
\rho(x,y)=\sum_{\ell} w_\ell\;\exp\!\left(-\tfrac{1}{2\sigma_r^2}\,\big(\|(x,y)\|-r_\ell\big)^2\right).
\]

\emph{Two-moons (moons).} Two semicircles of radius \(R\), centered at \((-\delta,0)\) (upper arc) and \((+\delta,-\mathrm{gap})\) (lower arc). Discretize each arc with anchors \(\{a_m\}\) and sum isotropic Gaussians of width \(\sigma\):
\[
\rho(x,y)=\sum_{m}\exp\!\left(-\tfrac{1}{2\sigma^2}\,\big\|(x,y)-a_m\big\|^2\right).
\]

\paragraph{Reward instantiation}
We instantiate each family by specifying a small set of shape parameters; the final log-reward is
\[
\log R(x,y)=\log\!\big((1-\lambda)\,\rho(x,y)+\lambda\big),\qquad \lambda=10^{-6},
\]
with \(W=H\). \cref{tab:s3-rewards} summarizes the defaults we use.

\begin{table}[h]
\begin{tabular}{ll}
\hline
Family & Defaults \\
\hline
8 Gaussians (8g) & radius \(R=0.8\,W\); Gaussian width \(\sigma=1.0\). \\
Rings            & radii \(r\in\{0.4\,W,\ 0.8\,W\}\); radial width \(\sigma_r=1\); weights \(w=(1,\,1)\). \\
Two moons        & radius \(R=0.6\,W\); offset \(\delta=0.03\,R\); vertical gap \(0.018\,R\); width \(\sigma=1\); anchors \(n=256\). \\
\hline
\end{tabular}
\centering
\caption{Default parameterization of grid rewards (square grid with half-width \(W\)).}
\label{tab:s3-rewards}
\end{table}

\paragraph{Sampling.}
\emph{Forward} uses \(\varepsilon\)-mixing with the uniform distribution over valid actions:
\[
\tilde{\pi}_F(a\mid s)=(1-\varepsilon)\,\pi_F(a\mid s)+\varepsilon\cdot \mathcal{U}\big(\{a:\,M_F(s,a)=1\}\big),
\]
and \(\varepsilon=0\) at evaluation. \emph{Backward} is categorical from the masked \(\pi_B(\cdot\mid s)\).

\paragraph{Training setup and hyperparameters.}
We train with AdamW on batches of on-policy forward rollouts.
\[
\mathcal{L}_{\mathrm{TB}}=\mathbb{E}\Big[\big(\log \widehat{R}_\theta(x;\tau)-\log R(x)\big)^2\Big].
\]
\emph{Baseline} (\(K{=}1\)): single forward/backward pair \((\pi_F,\pi_B)\) and a scalar \(\log Z\).
\emph{Boosting}: freeze previously trained members and add a new forward/backward pair trained with the \(\alpha\)-Boosted objective (Sec.~\ref{sec:s1-boosting}).

\paragraph{Training configuration.}
\begin{table*}[h]
\centering
\caption{Default training setup for the grid environment.}
\label{tab:s3-train}
\begin{tabular}{@{}l p{0.72\linewidth}@{}}
\hline
Component & Setting \\
\hline
Grid \& horizon   & $W=15$, $T=2W$. \\
Terminal states   & $961$. \\
Batch size        & 128. \\
Exploration       & $\varepsilon\in\{0,\,0.1,\,0.2,\,0.3,\,0.4,\,0.5\}$ (evaluation uses $\varepsilon=0$). \\
Seeds             & $\{10,\ldots,15\}$. \\
Optimizer         & AdamW; learning rates $(\mathrm{pf},\,\mathrm{pb},\,\log Z)=(10^{-2},\,10^{-2},\,5\times 10^{-2})$. \\
Policy nets       & Forward/backward MLP with geometric Fourier time features; hidden size 128; \\
                  & 2 layers; 8 frequencies; input $[x,y,t/T]$. \\
Schedule          & Baseline: $10{,}000$ epochs; Booster \#1: resume from checkpoint $3{,}000$ ($\alpha=1.0$); \\
                  & Booster \#2: resume from checkpoint $6{,}000$ ($\alpha=1.0$). \\
\hline
\end{tabular}
\end{table*}

\noindent\emph{Seed control.} To prevent subsequent boosters from exploring new areas by coincidence (rather than due to the ensemble’s intrinsic reward), we initialize \textbf{all} boosters with the same seeds as the baseline (same seed for forward/backward policies and for the environment RNGs).

\paragraph{Evaluation metric.}
On the full terminal slice \(t=T\) we compare the model to the normalized reward:
\[
p^\star(x)=\frac{R(x)}{\sum_{x'} R(x')},\qquad
p(x)= \frac{\widehat{R}(x)}{\widehat{Z}},
\]
with \(\widehat{Z}=\sum_{x'}\widehat{R}(x')\).
In practice, we estimate \(\widehat{R}(x)\) by Monte Carlo marginalization over backward paths \emph{per ensemble member}:
for each terminal \(x\) and each member \(k\), we draw \(B=10\) backward trajectories
\(\tau^{(b)}\sim P_B^{(k)}(\cdot\mid x)\) and set
\[
\widehat{R}^k(x)\;:=\;\frac{1}{B}\sum_{b=1}^{B}
\frac{Z_k\,P_F^{(k)}(\tau^{(b)})}{P_B^{(k)}(\tau^{(b)}\mid x)}
\]
then aggregate \(\widehat{R}(x)=\sum_k \widehat{R}^k(x)\).
We report
\[
\mathrm{L}_1 \;=\; \frac{1}{|\mathcal{X}|}\sum_{x\in\mathcal{X}}\big|\,p^\star(x)-p(x)\,\big|.
\]

\begin{figure*}[h]
  \centering
  \setlength{\tabcolsep}{2pt}            % tighter columns
  \renewcommand{\arraystretch}{0}         % no extra row height
  \begin{tabular}{*{6}{c}}
    % -------- Row 1: Eight-Gaussians --------
    \includegraphics[width=0.155\textwidth]{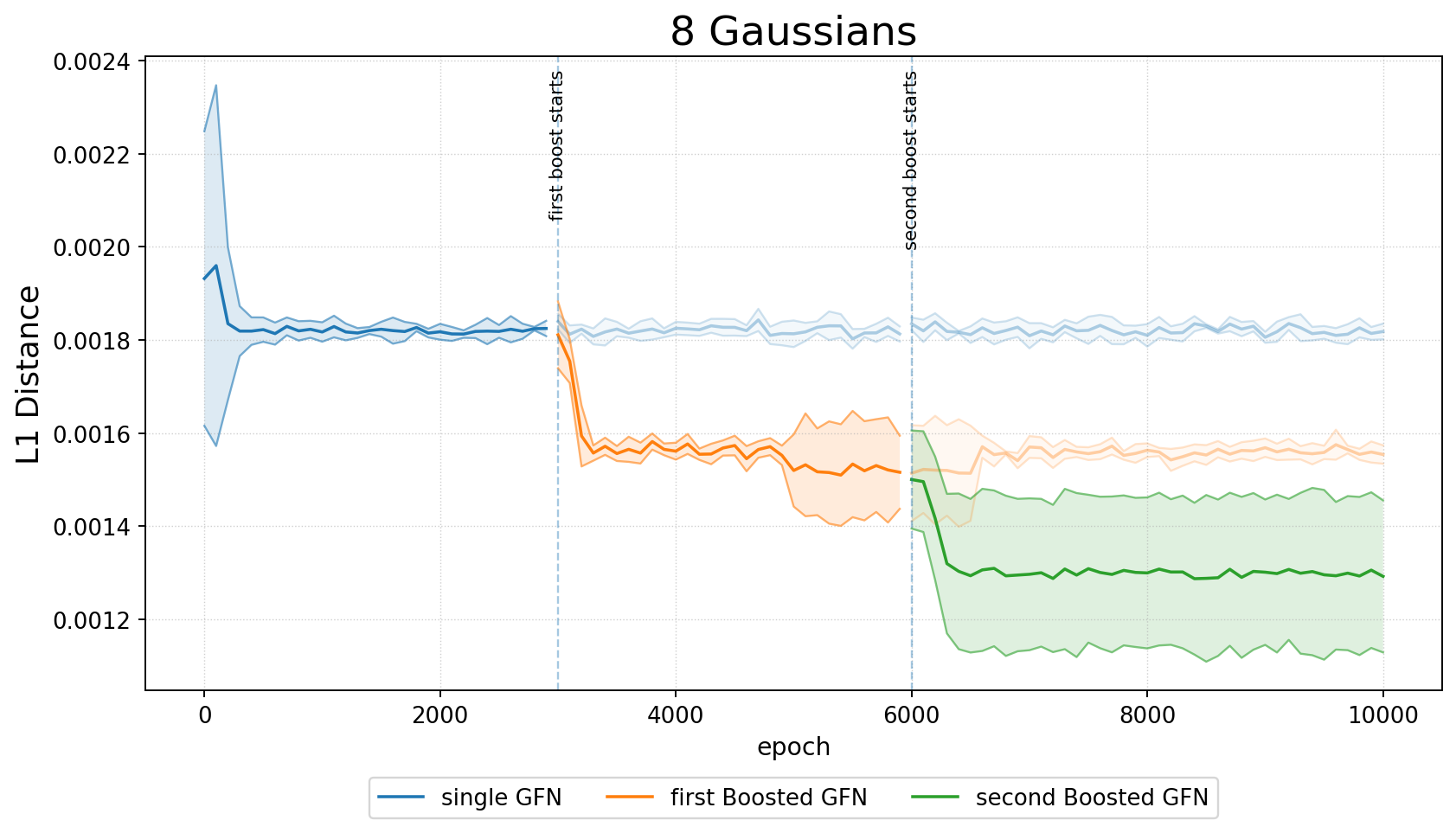} &
    \includegraphics[width=0.155\textwidth]{figs/8g/eps-0.1.png} &
    \includegraphics[width=0.155\textwidth]{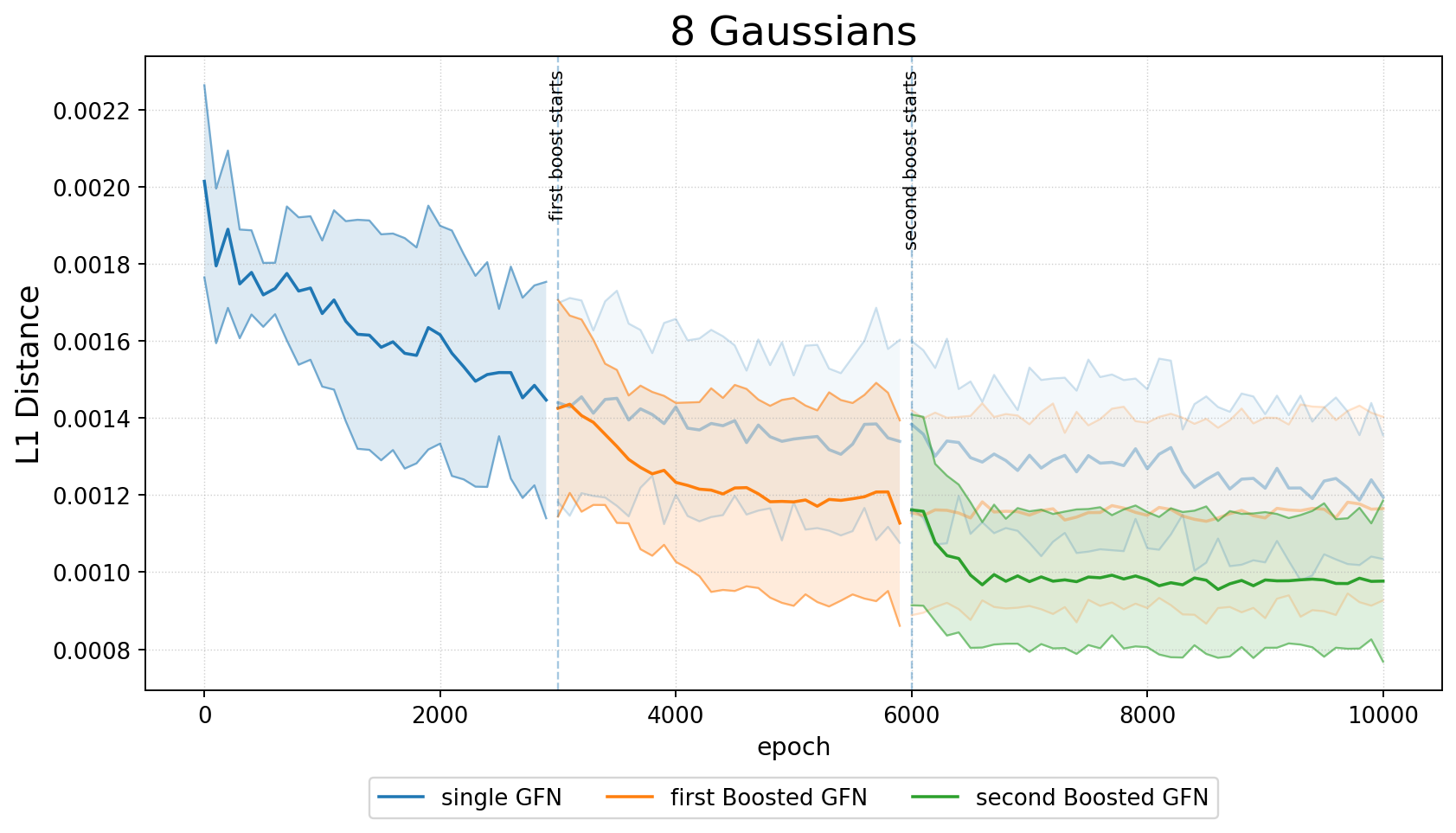} &
    \includegraphics[width=0.155\textwidth]{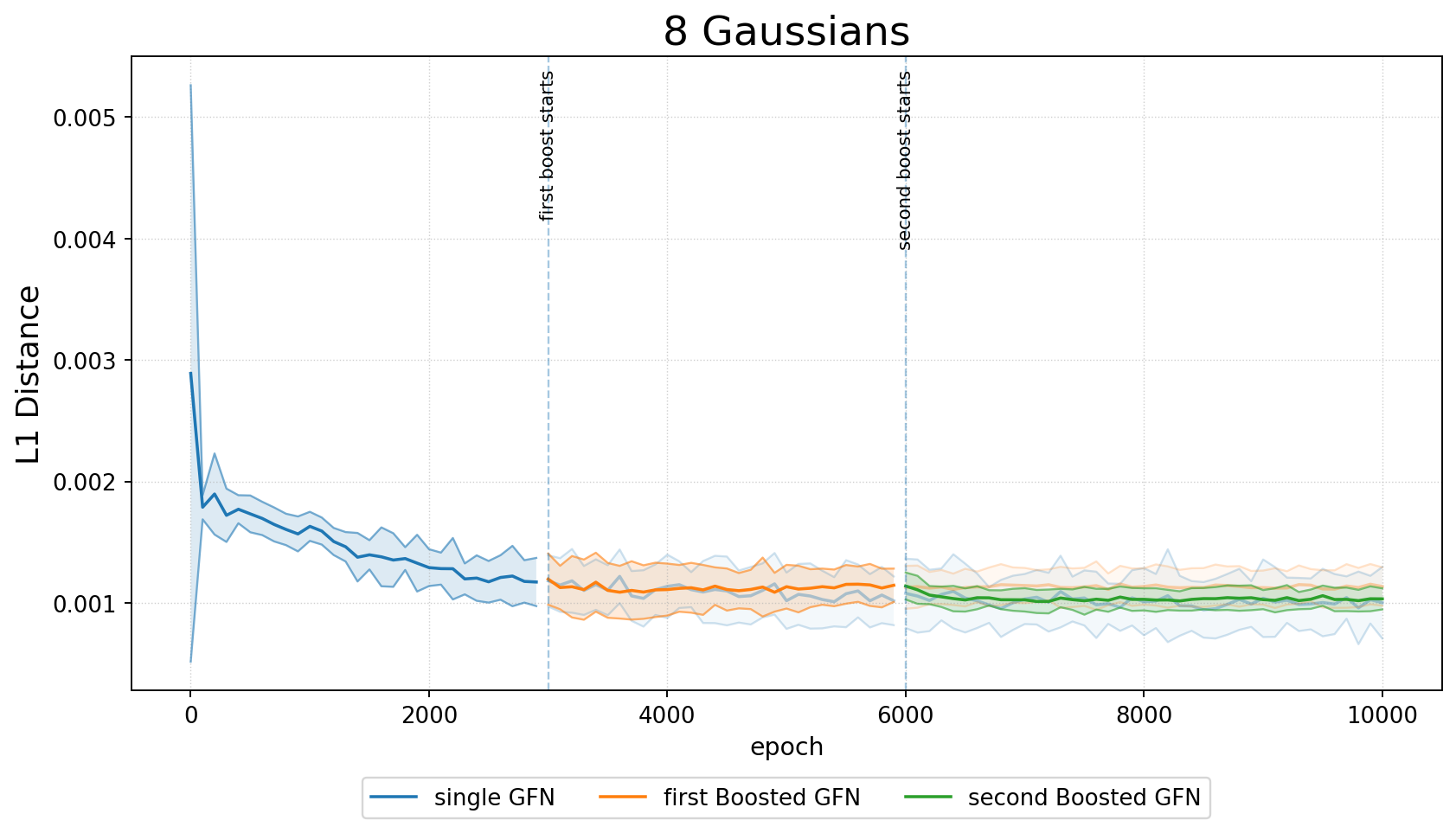} &
    \includegraphics[width=0.155\textwidth]{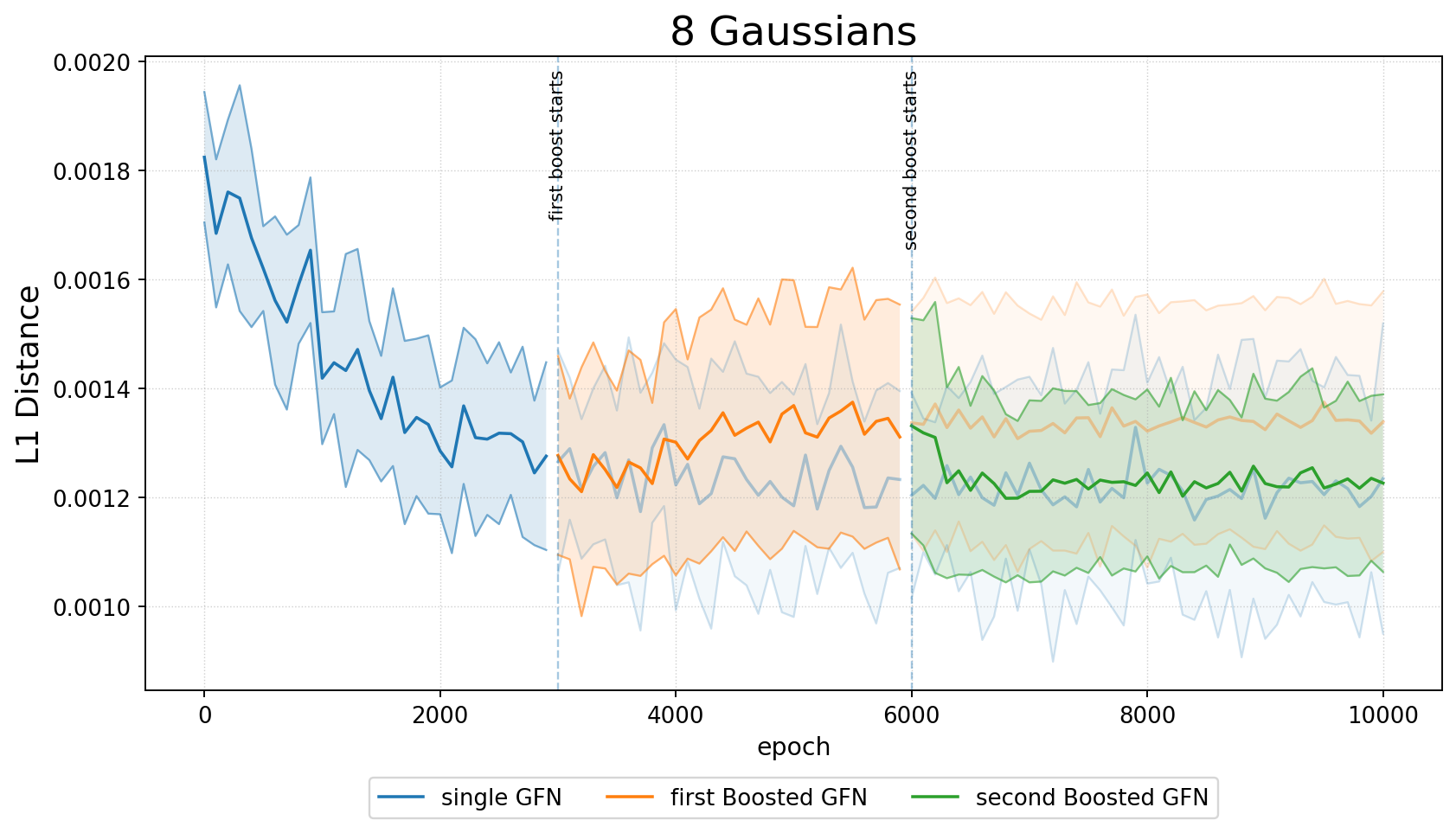} &
    \includegraphics[width=0.155\textwidth]{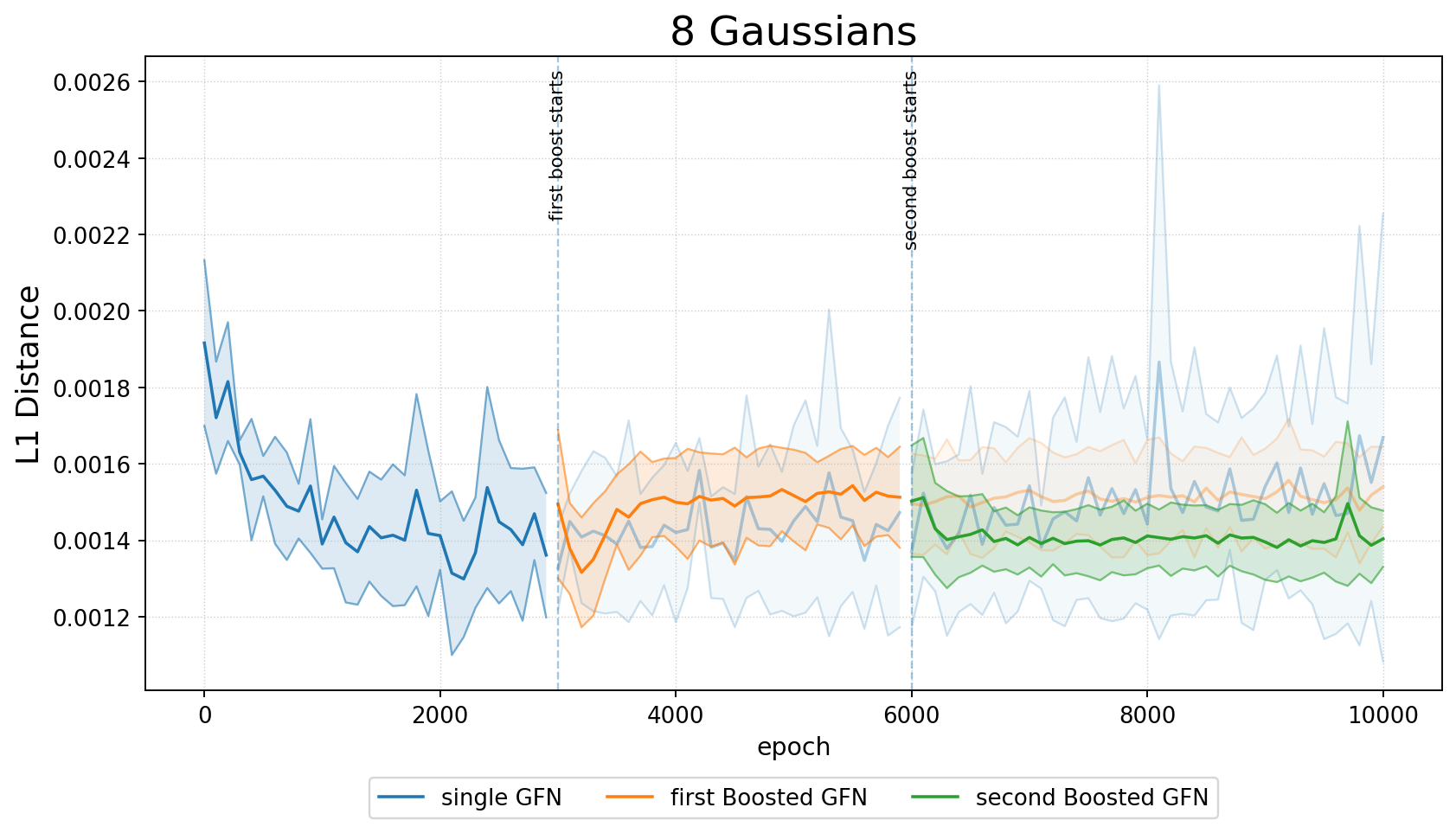} \\

    % -------- Row 2: Rings --------
    \includegraphics[width=0.155\textwidth]{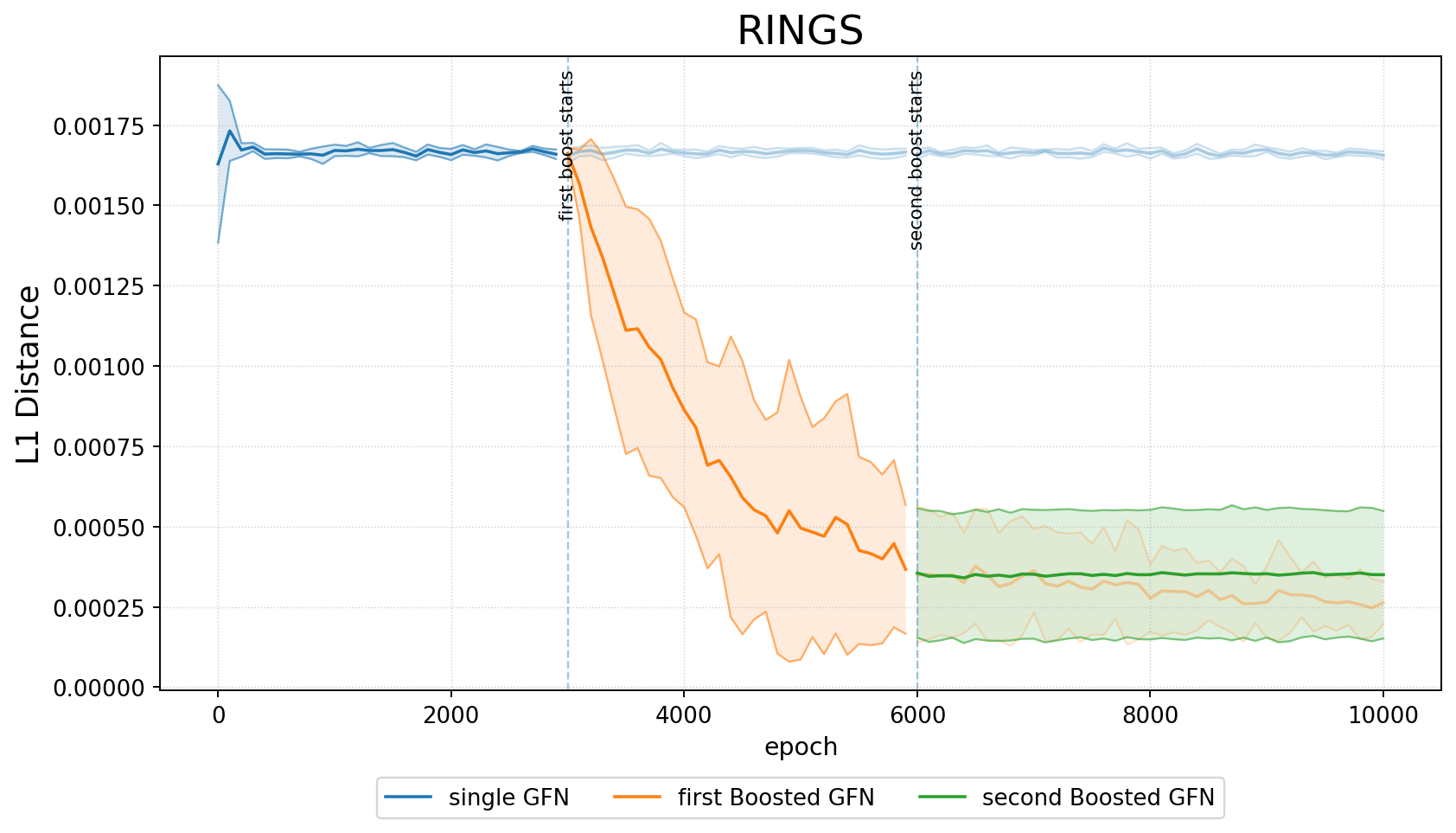} &
    \includegraphics[width=0.155\textwidth]{figs/rings/eps-0.1.png} &
    \includegraphics[width=0.155\textwidth]{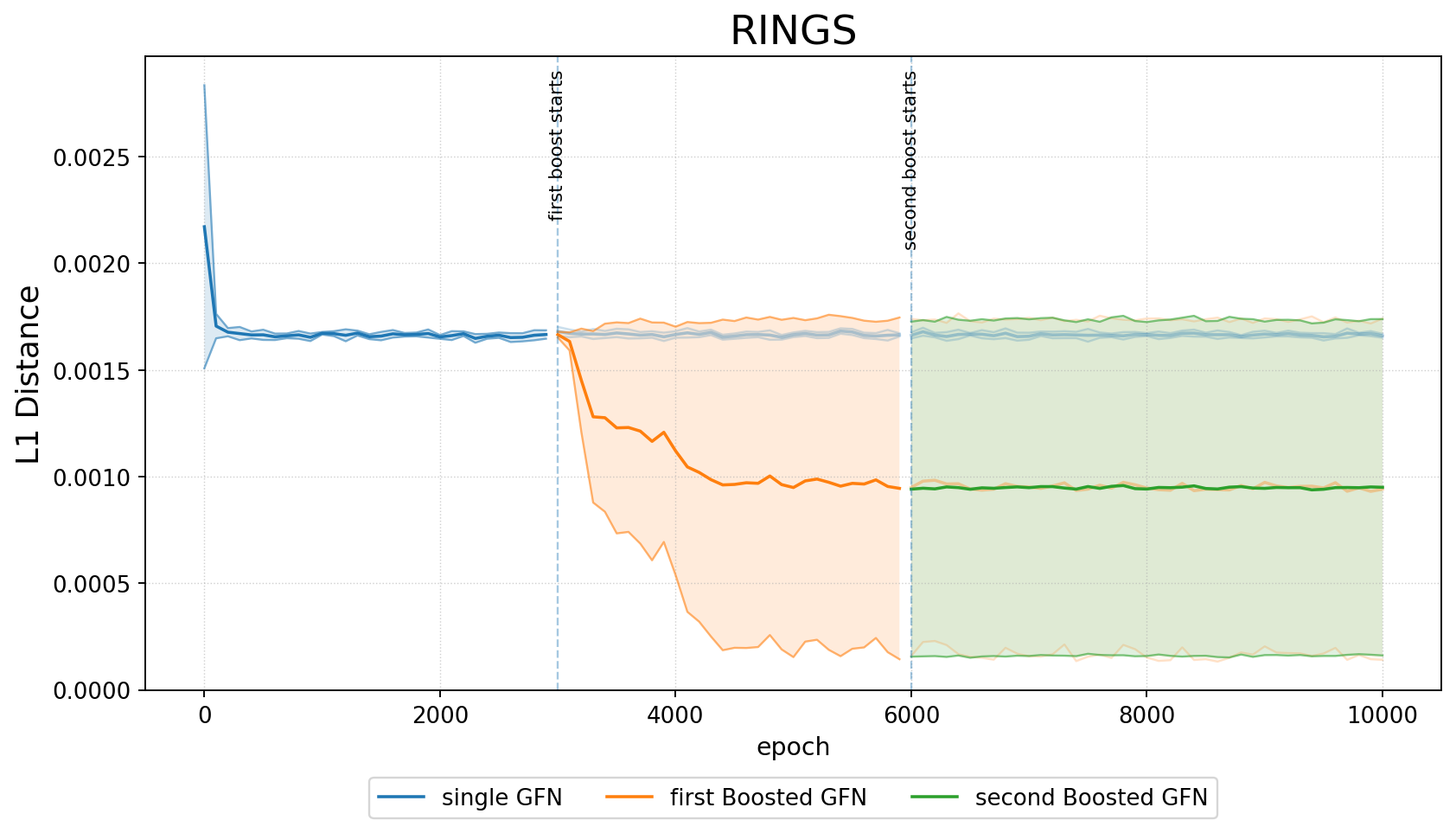} &
    \includegraphics[width=0.155\textwidth]{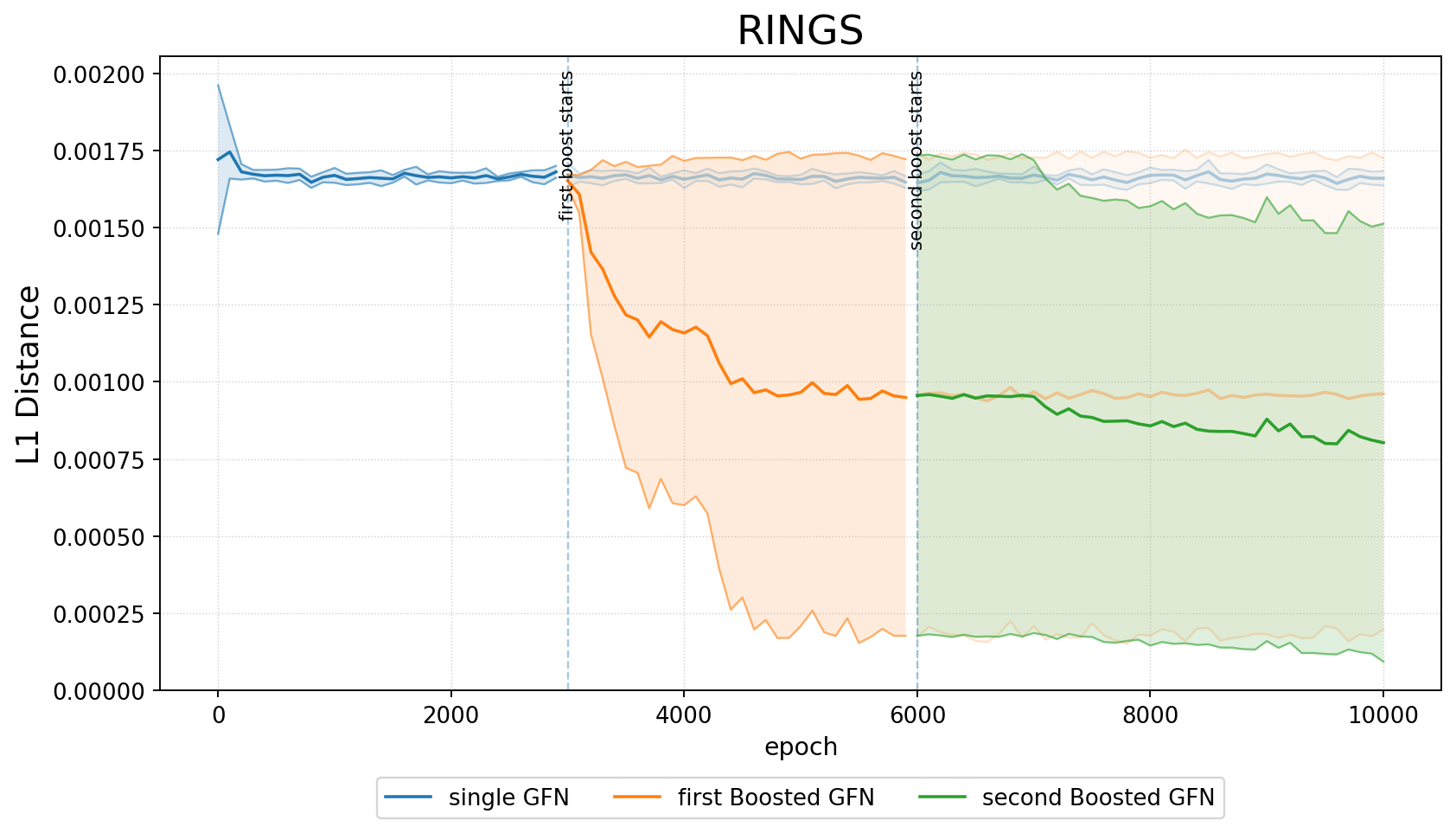} &
    \includegraphics[width=0.155\textwidth]{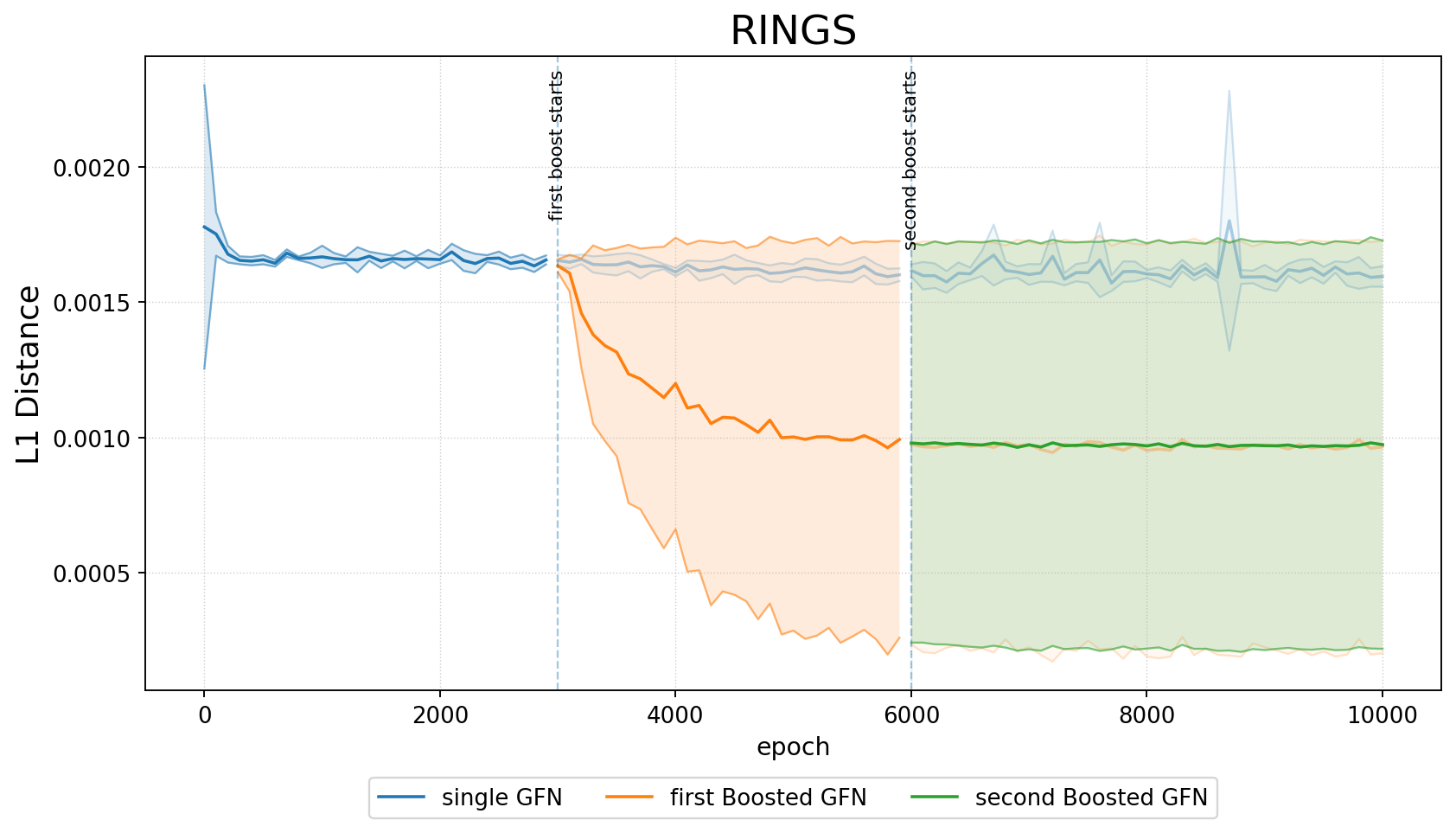} &
    \includegraphics[width=0.155\textwidth]{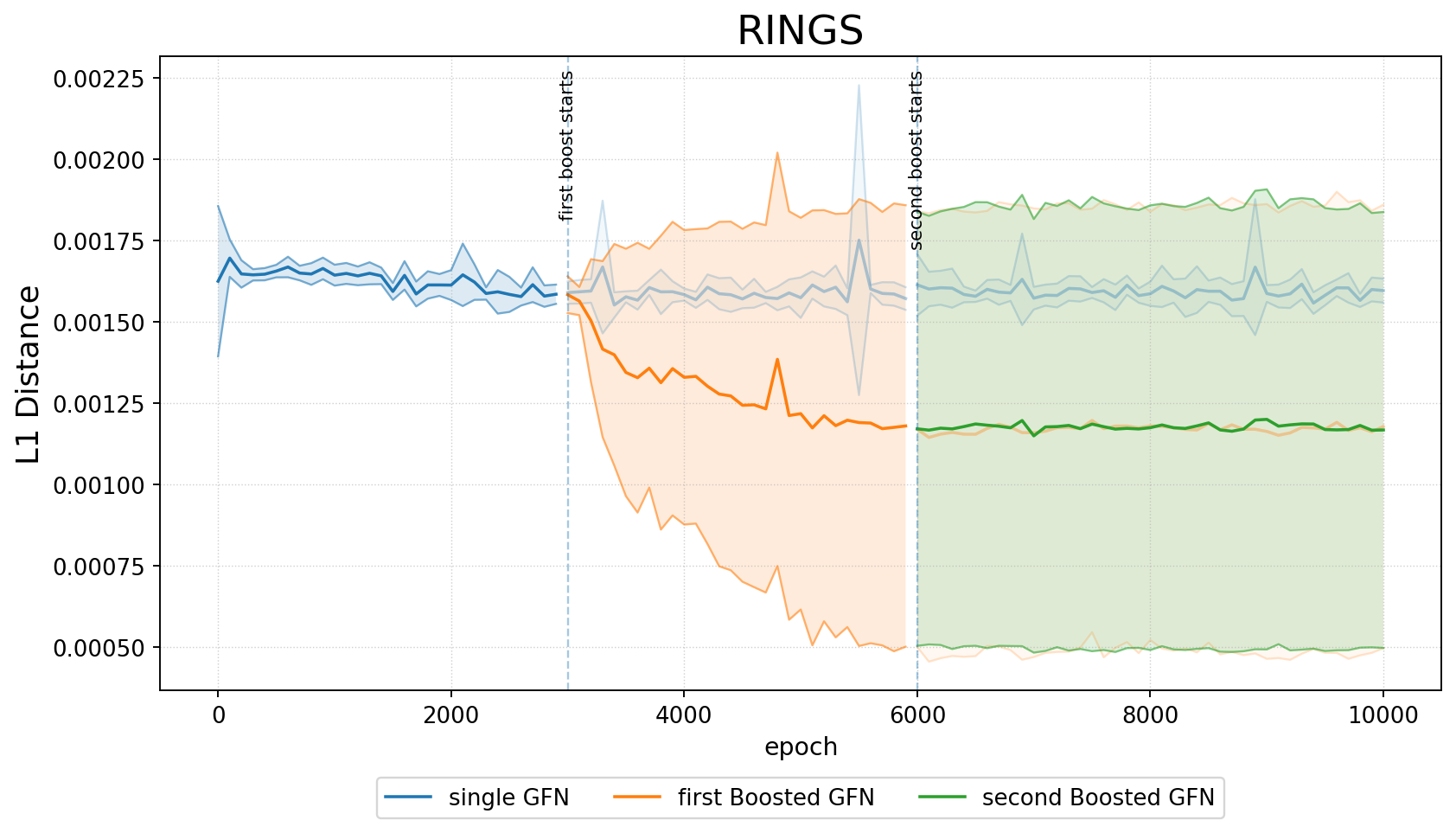} \\

    % -------- Row 3: Moons --------
    \includegraphics[width=0.155\textwidth]{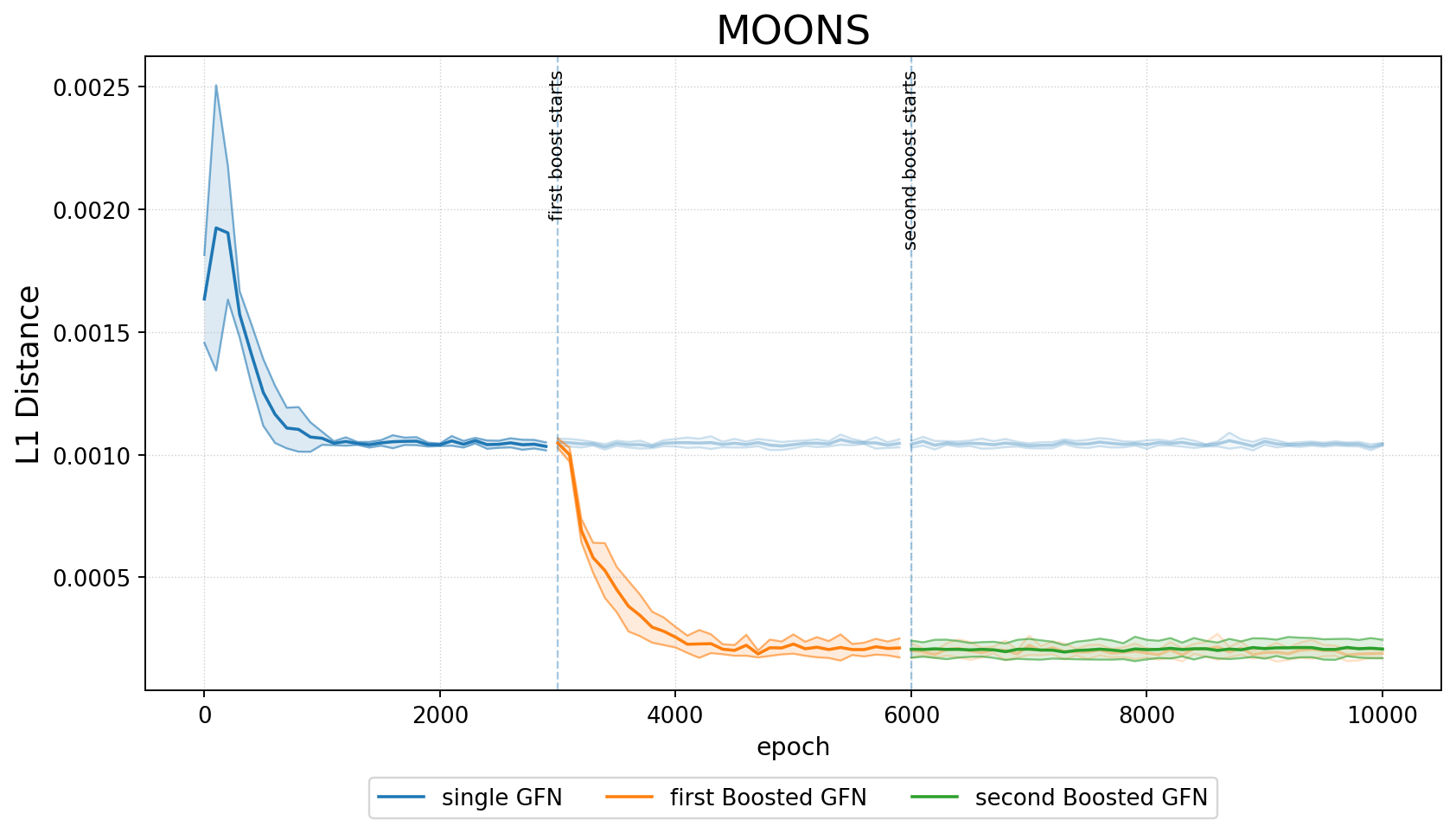} &
    \includegraphics[width=0.155\textwidth]{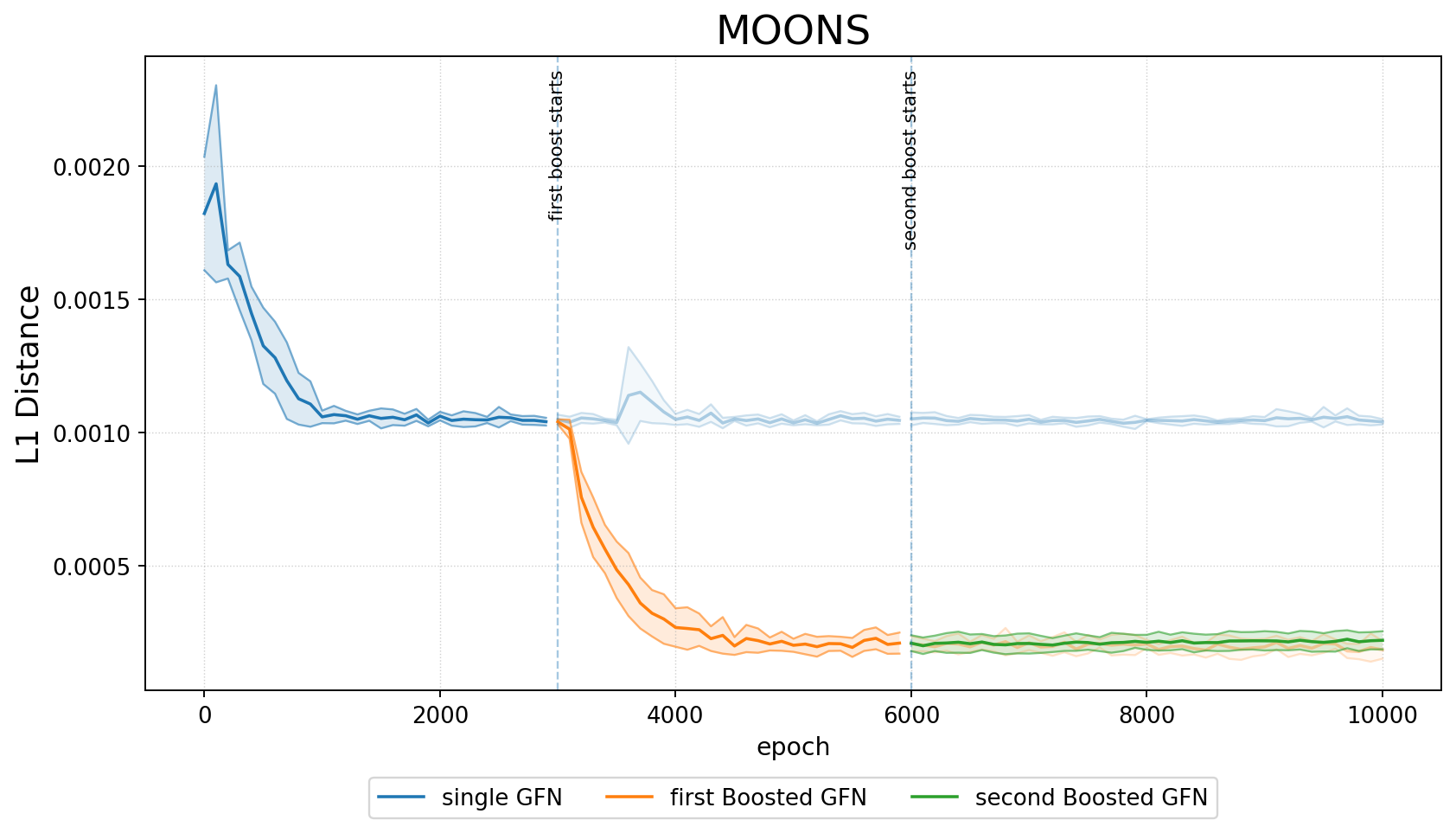} &
    \includegraphics[width=0.155\textwidth]{figs/moons/eps-0.2.png} &
    \includegraphics[width=0.155\textwidth]{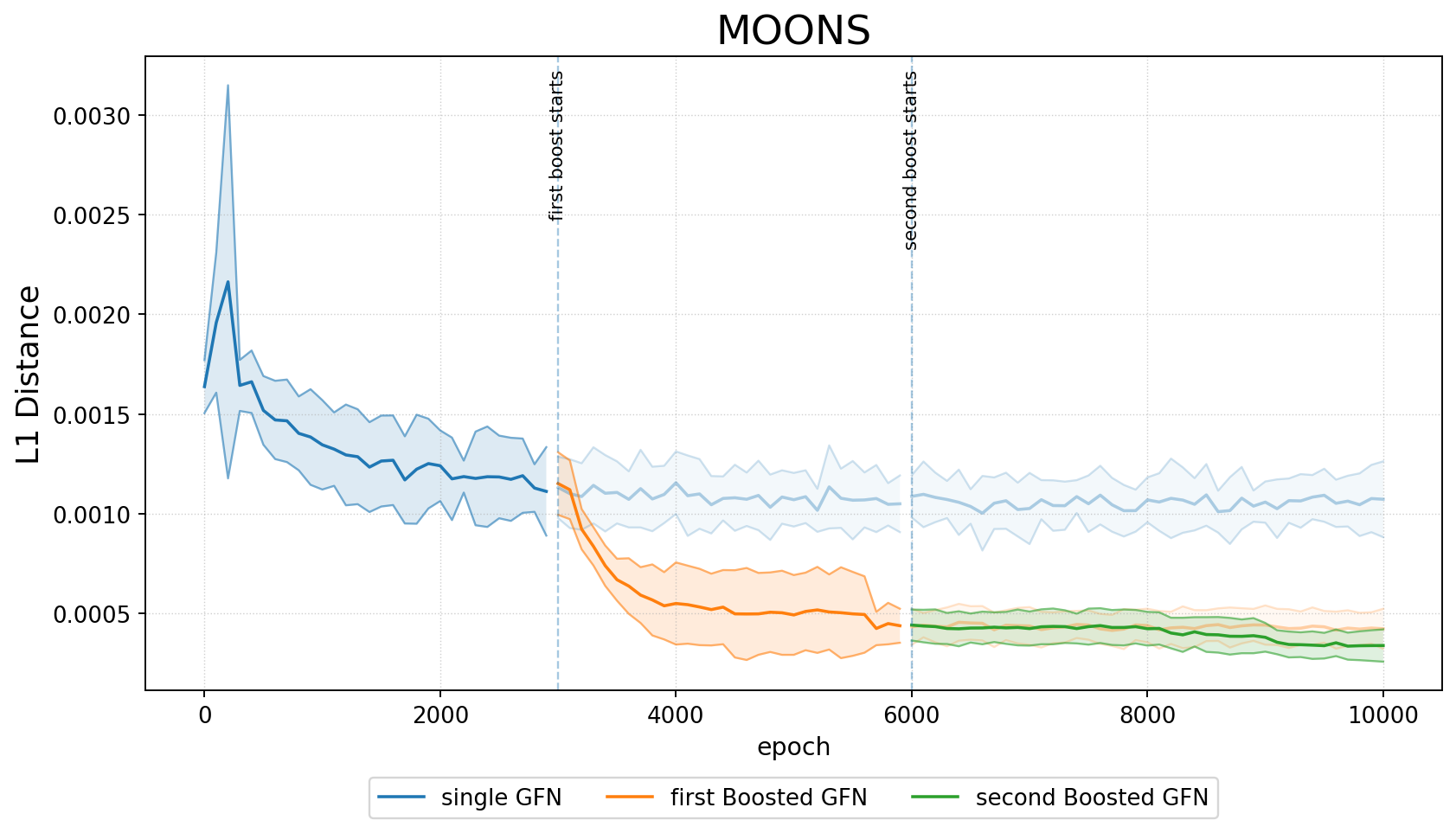} &
    \includegraphics[width=0.155\textwidth]{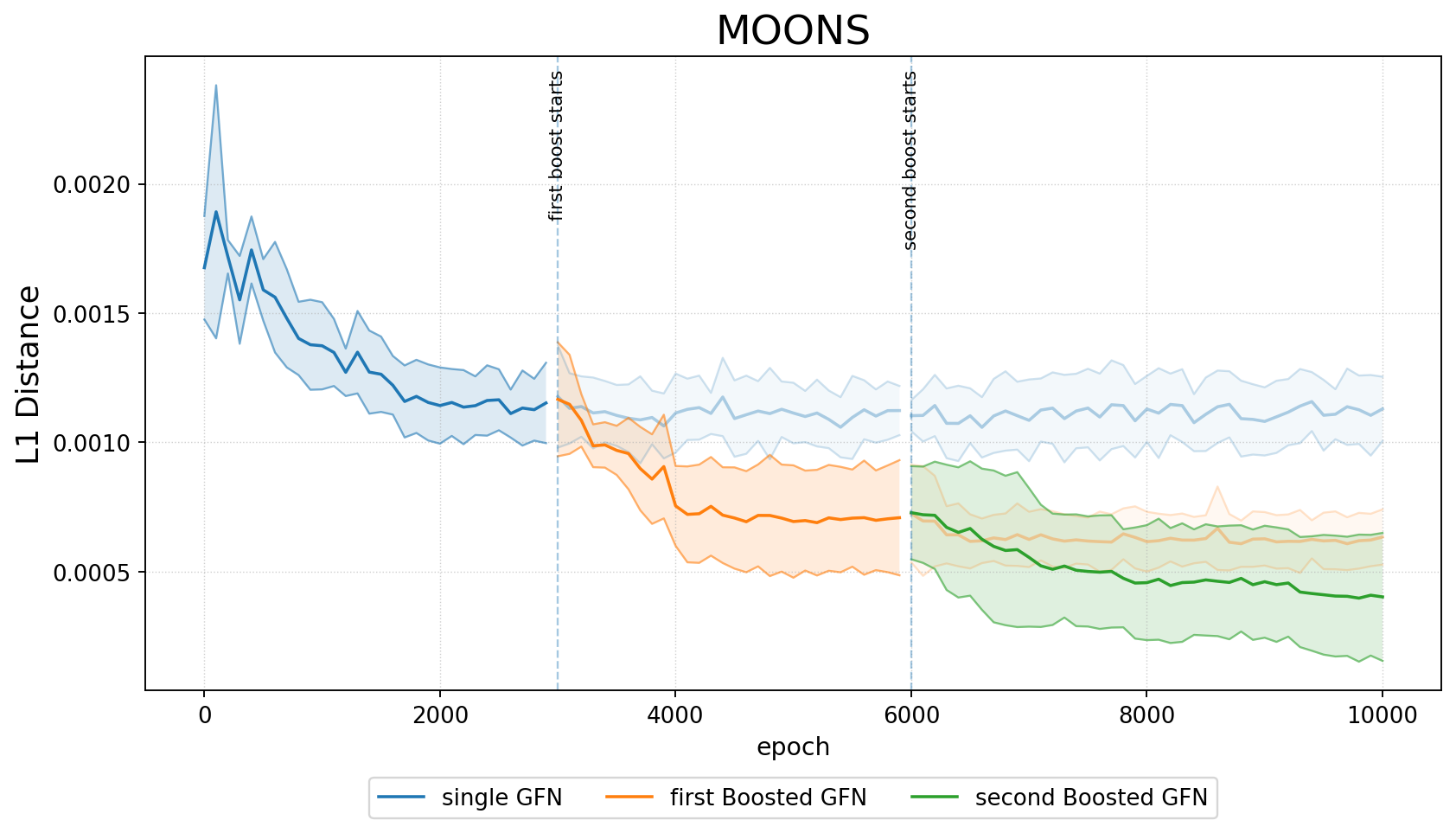} &
    \includegraphics[width=0.155\textwidth]{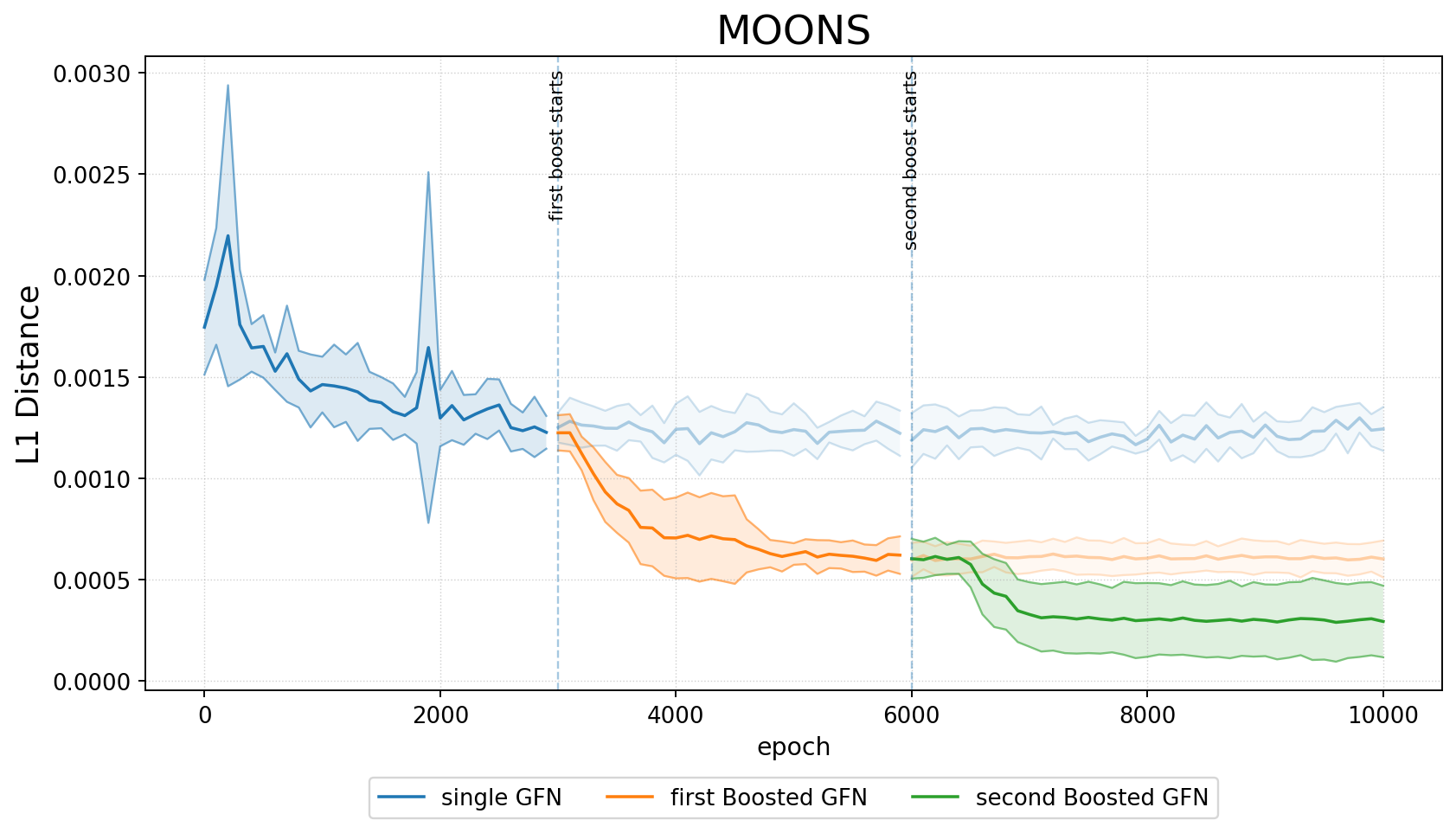}
  \end{tabular}

  \caption{\textbf{Synthetic targets across exploration levels.}
  Columns correspond to $\varepsilon\in\{0,0.1,0.2,0.3,0.4,0.5\}$.
  Rows (top to bottom): \textsc{Eight-Gaussians}, \textsc{Rings}, \textsc{Moons}.
  Compare L1 distance between the true probability and the model as exploration increases.}
  \label{fig:synth-eps-grid}
\end{figure*}

\section{Peptide Generation}\label{sec:peptides}

\paragraph{Sequence space and length.}
We generate variable-length peptides over an alphabet without Cysteine.
Let the vocabulary be
\[
\mathcal{V}=\{\texttt{STOP}\}\,\cup\,\{A,D,E,F,G,H,I,K,L,M,N,P,Q,R,S,T,V,W,Y\},
\quad |\mathcal{V}|=20,
\]

Sequences have length
\(L\in\{1,\dots,L_{\max}\}\) with \(L_{\max}=10\).
A trajectory is a token sequence \(y_{1:L}\in\mathcal{V}^L\).
The internal state is a fixed-width array \(Y\in\{0,\dots,19\}^{L_{\max}}\) (right-padded with
\(=\!0\)); the current step is \(t=\#\{i:\,Y_i\neq 0\}\).

\paragraph{Actions and dynamics.}
At step \(t\) the forward policy chooses \(a_t\in\{0,\dots,19\}\) (indexing \(\mathcal{V}\)):
\[
a_t=\begin{cases}
0 & \text{write \texttt{STOP} and terminate}\\
1..19 & \text{append the corresponding amino acid and continue.}
\end{cases}
\]
The environment writes \(Y_{t+1}\gets a_t\). If \(t=L_{\max}\) we force \(a_t=0\) (termination).

\emph{Backward actions.}
Given a nonempty prefix \(y_{1:t}\) with \(t\ge 1\), the only valid backward move is to undo the last forward action (pop the last token):
\[
\pi_B(a^\leftarrow \mid y_{1:t}) \;=\;
\begin{cases}
1, & a^\leftarrow = y_t \text{ and } t\ge 1,\\
0, & \text{otherwise,}
\end{cases}
\qquad
(y'_{1:t-1},\,t-1)\;=\;\mathrm{pop}(y_{1:t}).
\]

\paragraph{Policy architecture.}
We use an autoregressive policy \(\pi_\theta(a_t\mid y_{1:t})\) implemented by a light MLP at each step \(t\), the policy builds a feature vector
\(\phi_t \in \mathbb{R}^{W d_e + d_p}\) by (i) taking the last \(W=6\) tokens
of the prefix (indices clamped so that early steps use the padding token) and
embedding them with \(d_e=64\), then (ii) concatenating a sinusoidal positional
encoding of the current step of dimension \(d_p=16\).
With \(|\mathcal V|=20\) tokens where \texttt{EOS/STOP} shares index \(0\)
with the padding token, the embedding row for index \(0\) is fixed to zero.
The network is a single hidden-layer MLP
with hidden size \(128\).
When the buffer is full (\(t=L_{\max}\)), we hard-mask non-\texttt{EOS/STOP} logits
to \(-\infty\) to force termination.
During training we sample with \(\varepsilon\)-mixing against the uniform
distribution over the \(20\) actions; evaluation uses \(\varepsilon=0\).

\paragraph{Reward model (data \& training).}
We follow the model-based sequence-design setup of \citet{Angermueller2020Model-based}: for each pathogen we train a binary Random-Forest (RF) classifier on GRAMPA that serves as a proxy reward. Preprocessing largely follows prior AMP work \citep{witten2019amp}, with one substantive change tailored to our setting: we restrict peptide lengths to \(1\le L\le 10\). Concretely, we uppercase sequences and strip non-letters; drop YADAMP entries; remove modified peptides via keyword filters (e.g., PEG, fluorophores, lipid tags); harmonize MIC units to \(\mu\)M when present and aggregate replicates by geometric mean per (sequence, bacterium); deduplicate by (sequence, bacterium); and draw negatives as random peptides matched to the positive length histogram (balanced). Sequences are encoded as fixed-width one-hot vectors with \(L_{\max}=10\) and an EOS token at index \(0\) and for each target we fit an Random Forrest.

\paragraph{Runtime scoring.}
Given a peptide \(y\), we score it by the maximum AMP probability across the five RFs (targets:
\emph{E.~coli}, \emph{S.~aureus}, \emph{P.~aeruginosa}, \emph{B.~subtilis}, \emph{C.~albicans}):
\[
p_{\mathrm{RF}}(y)\;=\;\max_{j}\Pr\!\big(f_j(y)=\mathrm{AMP}\big).
\]

\paragraph{Probability-to-reward mapping.}
We choose a probability cutoff \(c=0.94\) and a temperature \(T=0.3\). Sequences with
\(p_{\mathrm{RF}}(y)\ge c\) saturate at unit reward \(R(y)=1\) (equivalently, \(\log R(y)=0\)).
For \(p_{\mathrm{RF}}(y)<c\), we downweight exponentially using a length-aware logistic margin.
Let \(\mathrm{logit}(u)=\log u-\log(1-u)\) and define the margin
\(\Delta(y)=\mathrm{logit}\!\big(p_{\mathrm{RF}}(y)\big)-\mathrm{logit}(c)\).
Then we set
\[
\log R(y)\;=\;
\begin{cases}
0, & p_{\mathrm{RF}}(y)\ge c,\\[4pt]
\displaystyle \frac{L}{T}\,\Delta(y), & p_{\mathrm{RF}}(y)<c,
\end{cases}
\qquad\text{clipped to }[-30,\,0],
\]

\paragraph{Trajectory likelihood and TB-style estimator (trivial backward).}
For a member \(k\) with forward policy \(\pi^{(k)}\) and scalar \(\log Z_k\), and any completed sequence \(y=y_{1:L}\), the TB estimator with the deterministic backward reduces to
\[
\log \widehat{R}^k(y)
\;=\;\log Z_k\;+\;\sum_{t=1}^{L}\log \pi^{(k)}\!\big(a_t \mid y_{1:t-1}\big)
\;=\;\log\!\big(Z_k\,P_F^{(k)}(y)\big).
\]
Because the backward policy is deterministic, the reverse-path probability \(P_B^{(k)}(\tau\mid y)\) is \(1\) along the unique reverse path; hence \(\log \widehat{R}_k(y)\) can be evaluated by replaying the \emph{forward} steps of the current sample—no backward rollouts are required. There is no per-trajectory mixing. In the boosting scheme, the marginal reference reward used by \(\alpha\)-Boost is the sum of terminal flows of the frozen members:
\[
\widehat{R}(y)\;=\;\sum_{k\in\mathcal{F}} Z_k\,P_F^{(k)}(y)\;=\;\sum_{k\in\mathcal{F}} \widehat{R}^k(y).
\]

\paragraph{Training setup and hyperparameters.}
We train with AdamW on batches of on-policy rollouts. The objective is the TB regression in Eq.~\eqref{eq:s1-tb} (or the \(\alpha\)-Boosted variant in Eq.~\eqref{eq:s1-boost}). \emph{Baseline} (\(K{=}1\)) uses a single forward policy \(\pi_\theta\) and a scalar \(\log Z\); the backward policy is deterministic (pop-last-token) and not learned. \emph{Boosting:} we freeze previously trained members and add a new forward policy and scalar trained with the \(\alpha\)-Boosted objective; the reference \(\widehat{R}_\text{old}\) is the sum of terminal flows of the frozen members (Sec.~\ref{sec:s1-boosting}).

\paragraph{Training configuration (peptides).}
\begin{table*}[h]
\centering
\caption{Default training setup for peptide generation.}
\label{tab:s4-train}
\begin{tabular}{@{}l p{0.72\linewidth}@{}}
\hline
Component & Setting \\
\hline
Alphabet \& length & 19 amino acids (no Cys) $+$ STOP; variable length $1\le L\le L_{\max}=10$. \\
Batch size        & 4096. \\
Exploration       & $\varepsilon\in\{0,\,0.1,\,0.2,\,0.3\}$ (evaluation uses $\varepsilon=0$). \\
Seeds             & $\{10,\ldots,14\}$. \\
Policy MLP        & Embedding dim $64$; context window $6$ (last tokens); sinusoidal pos.\ encoding dim $16$; hidden $128$; \\
                  & output over $|\mathcal V|{=}20$ actions; force STOP at $L_{\max}$. Backward is deterministic. \\
Optimizer         & AdamW; learning rates $(\mathrm{pf},\,\log Z)=(5\times 10^{-2},\,10^{-1})$. \\
Schedule          & Baseline: $3000$ epochs; Boosters at checkpoints $1200$ and $2400$ with $\alpha\in\{0,\,1\}$. \\
Reward shaping    & RF cutoff $c=0.94$; temperature $T=0.3$; $\log R$ clipped to $[-30,\,0]$. \\
\hline
\end{tabular}
\end{table*}

\noindent\emph{Seed control.} To prevent subsequent boosters from exploring new areas by coincidence (rather than due to the ensemble’s intrinsic reward), we initialize \textbf{all} boosters with the same seeds as the baseline (same seeds for the policy and environment RNGs).

\paragraph{Evaluation protocol and metric (peptides).}
Every $50$ epochs, we evaluate the current model (or ensemble) as follows. We draw $N{=}1000$ peptide trajectories with $\varepsilon{=}0$ by first sampling an ensemble member $k$ with probability proportional to $Z_k$, then rolling out a trajectory from its forward policy $\pi^{(k)}$. We keep only terminal sequences with $1\le L\le 10$ and deduplicate them to obtain a set $\mathcal{S}_{\mathrm{ep}}$ of unique sequences. Each sequence $y\in\mathcal{S}_{\mathrm{ep}}$ is scored by the RF bundle to obtain $p_{\mathrm{RF}}(y)$; we then form the high-confidence subset
\[
\mathcal{H}_{\mathrm{ep}}\;=\;\bigl\{\,y\in\mathcal{S}_{\mathrm{ep}}:\ p_{\mathrm{RF}}(y)\,\ge\,94\%\,\bigr\}.
\]
Because our reward mapping clips $\log R(y)$ to $0$ whenever $p_{\mathrm{RF}}(y)\ge 94\%$, this is equivalent to selecting sequences with $R(y)=1$. The reported metric at that evaluation point is simply the count $\mathcal{H}_{\mathrm{ep}}$

We report the mean $\pm$ standard deviation across random seeds.

\begin{figure*}[h]
  \centering
  \setlength{\tabcolsep}{2pt}            % tighter columns
  \renewcommand{\arraystretch}{0}         % no extra row height
  \begin{tabular}{*{4}{c}}
    % -------- Row 1: Eight-Gaussians --------
    \includegraphics[width=0.23\textwidth]{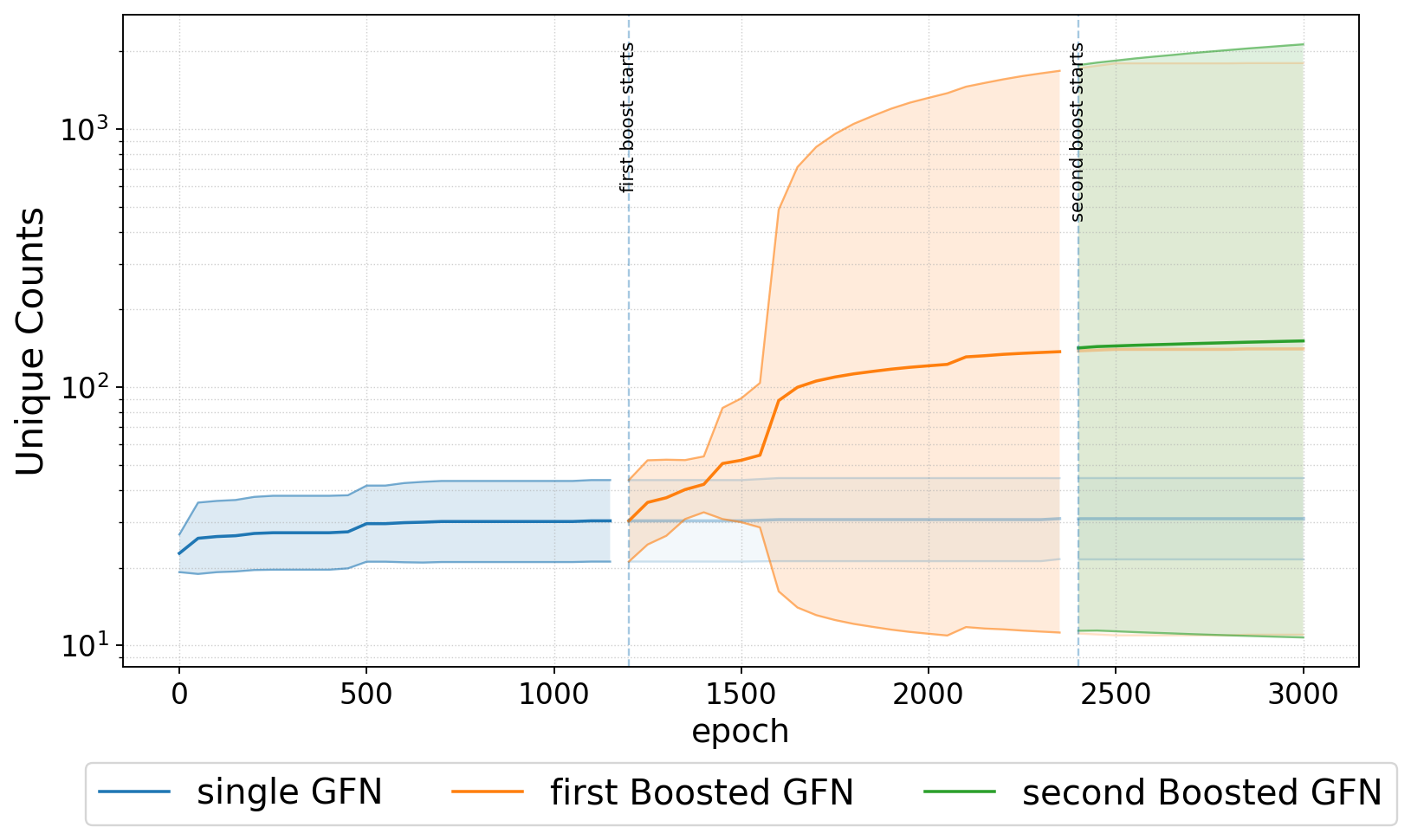} &
    \includegraphics[width=0.23\textwidth]{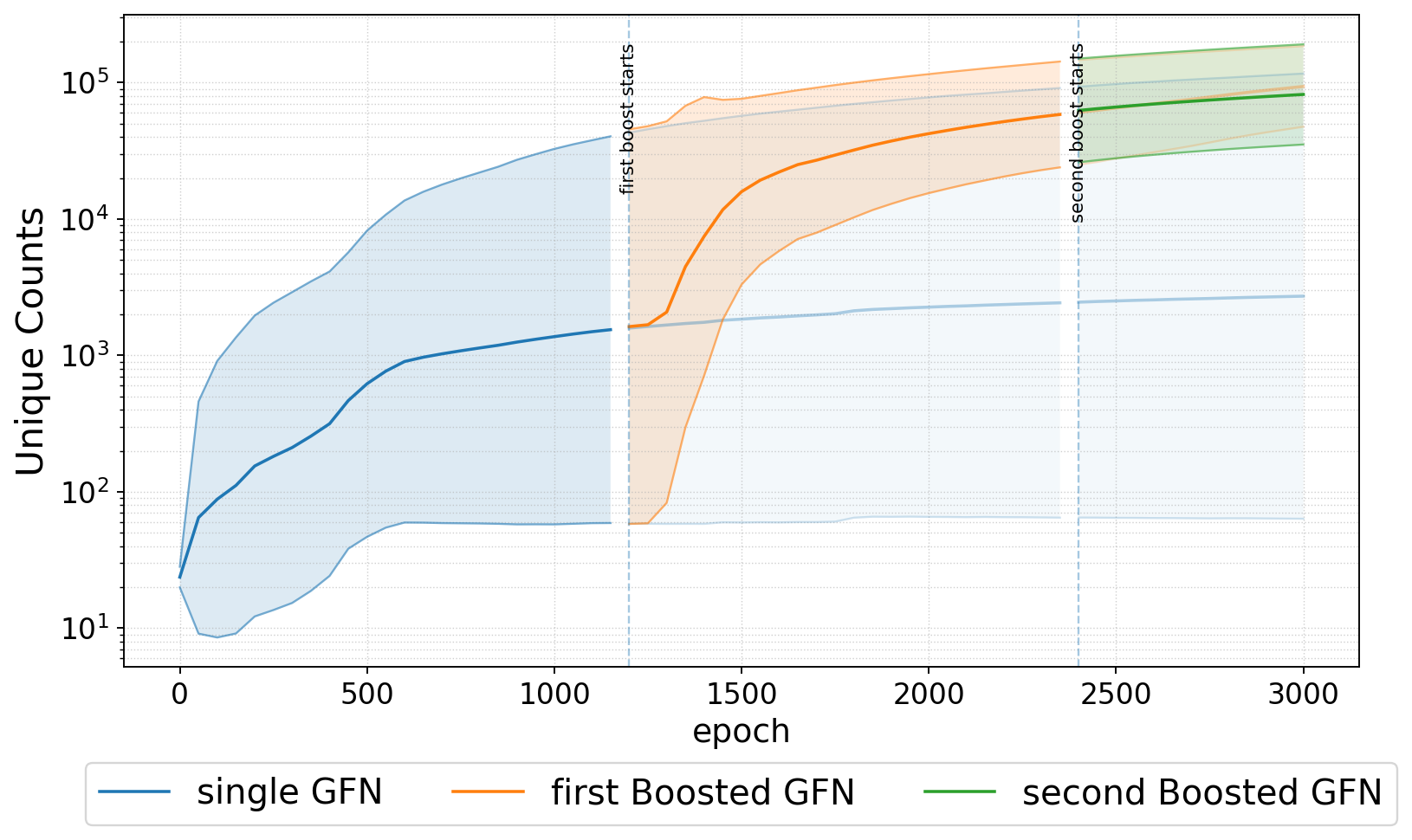} &
    \includegraphics[width=0.23\textwidth]{figs/peptides/alpha-0.0/uniq_ge0_log_eps-0.2.png} &
    \includegraphics[width=0.23\textwidth]{figs/peptides/alpha-0.0/uniq_ge0_log_eps-0.3.png}\\

  \end{tabular}

  \caption{\textbf{Peptide Generation across exploration levels.}
  Columns correspond to $\varepsilon\in\{0,0.1,0.2,0.3\}$.
  Comparing number o unique peptides generated as exploration increases.}
\end{figure*}

% -------------------------------------------------------------
%========================
% Theorems and Proofs
%========================
\section{Theorems and Proofs}
\label{sec:proofs}
\renewcommand\thetheorem{\arabic{theorem}}
\setcounter{theorem}{0}

\begin{theorem}[Zero variance at the TB optimum]\label{thm:zero-var}
Let \( \Gamma_x := \{\tau:\ \tau \text{ ends at } x\}\) and
\[
P_F(x) := \sum_{\tau\in\Gamma_x} P_F(\tau),\qquad
S := \{x:\ P_F(x)>0\}.
\]
Assume a finite or at-most-countable DAG, so that \(P_F(x)=0 \Rightarrow P_F(\tau)=0\) for all \(\tau\in\Gamma_x\) and assume the TB loss has zero expectation under the forward sampler:
\begin{equation}\label{eq:TB-zero}
\mathbb{E}_{\tau\sim P_F}\!\left[
\left(\log\frac{Z_\theta\,P_F(\tau)}{R(x)\,P_B(\tau\mid x)}\right)^2
\right]=0,
\end{equation}
where \(x\) is the terminal state of \(\tau\).
For every \(x\in S\), assume mutual absolute continuity on \(\Gamma_x\):
\[
P_F(\tau)=0 \iff P_B(\tau\mid x)=0,\qquad \forall\,\tau\in\Gamma_x.
\]
and define $\widehat{R}_\theta(x;\tau) := Z_\theta\,\frac{P_F(\tau)}{P_B(\tau\mid x)}$. Then, for every terminal \(x\):
\[
\mathbb{E}_{\tau\sim P_B(\cdot\mid x)}\!\big[\widehat{R}_\theta(x;\tau)\big]
= R(x)\,\mathbf{1}_{\{x\in S\}},
\qquad
\operatorname{Var}_{\tau\sim P_B(\cdot\mid x)}\!\big[\widehat{R}_\theta(x;\tau)\big]=0.
\]
\end{theorem}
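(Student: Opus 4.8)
The plan is to exploit the fact that the Trajectory Balance loss is a \emph{squared} quantity, so its vanishing expectation under $P_F$ forces the integrand to vanish pointwise wherever forward mass is placed. Concretely, since $\left(\log\frac{Z_\theta P_F(\tau)}{R(x)P_B(\tau\mid x)}\right)^2 \ge 0$ and its $P_F$-expectation is zero, in an at-most-countable DAG this means the square is zero for \emph{every} $\tau$ with $P_F(\tau)>0$. The logarithm is well-defined on this set because any such $\tau$ necessarily terminates in some $x\in S$, where mutual absolute continuity guarantees $P_B(\tau\mid x)>0$. Hence I obtain the pointwise identity
\[
Z_\theta\, P_F(\tau) = R(x)\, P_B(\tau\mid x) \qquad \text{for all } \tau \text{ with } P_F(\tau) > 0.
\]

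Next I substitute this identity into the definition $\widehat{R}_\theta(x;\tau) = Z_\theta P_F(\tau)/P_B(\tau\mid x)$. Wherever $P_F(\tau)>0$, the ratio collapses to the constant $R(x)$, independent of $\tau$. This is the crux of the argument: the per-trajectory estimator becomes \emph{deterministic} on the relevant support, which is exactly what produces zero variance once I integrate against $P_B(\cdot\mid x)$.

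The argument then splits into two cases. For $x\in S$, mutual absolute continuity tells me that the supports of $P_F(\cdot)|_{\Gamma_x}$ and $P_B(\cdot\mid x)$ coincide, so $\widehat{R}_\theta(x;\tau)=R(x)$ holds for $P_B(\cdot\mid x)$-almost-every $\tau$; a constant random variable has expectation equal to that constant and zero variance, yielding $\mathbb{E}_{\tau\sim P_B(\cdot\mid x)}[\widehat{R}_\theta(x;\tau)]=R(x)$ and $\operatorname{Var}=0$. For $x\notin S$, the reachability assumption gives $P_F(\tau)=0$ for every $\tau\in\Gamma_x$, so the numerator of $\widehat{R}_\theta$ vanishes identically; on the $P_B(\cdot\mid x)$-support (where the denominator is strictly positive) the estimator is identically $0$, again constant, giving expectation $0$ and zero variance. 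The two cases are unified by the indicator $\mathbf{1}_{\{x\in S\}}$.

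The main obstacle is the measure-theoretic bookkeeping around the division by $P_B(\tau\mid x)$ and the transfer of an almost-sure statement established under $P_F$ to one under $P_B(\cdot\mid x)$. The mutual-absolute-continuity hypothesis is precisely what bridges this gap for $x\in S$, and I must be careful that it is \emph{not} needed for $x\notin S$, where the vanishing numerator alone forces the estimator to be zero on the entire $P_B$-support. I would also remark that the $0/0$ indeterminacy (both $P_F(\tau)$ and $P_B(\tau\mid x)$ zero) occurs only off the $P_B(\cdot\mid x)$-support and hence contributes nothing to either the expectation or the variance.
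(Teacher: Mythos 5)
Your proof is correct and follows essentially the same route as the paper's: deduce the pointwise balance identity $Z_\theta P_F(\tau) = R(x) P_B(\tau\mid x)$ on the forward support from the vanishing squared loss, transport it to $P_B(\cdot\mid x)$-almost-sure validity via mutual absolute continuity when $x\in S$, handle $x\notin S$ through the vanishing numerator, and conclude that the estimator is $P_B(\cdot\mid x)$-a.s.\ constant, hence has the stated mean and zero variance. Your version is in fact slightly more explicit than the paper's (the case split, the remark that mutual absolute continuity is unnecessary for $x\notin S$, and the handling of the $0/0$ set), but these are refinements of the same argument, not a different one.
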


\begin{proof}
From \eqref{eq:TB-zero} we have
\(
Z_\theta P_F(\tau)=R(x)P_B(\tau\mid x)
\)
for \(P_F\)-a.s.\ \(\tau\) (with \(x\) the terminal of \(\tau\)).
If \(x\in S\), mutual absolute continuity on \(\Gamma_x\) transports this equality to
\(P_B(\cdot\mid x)\)-a.s.; if \(x\notin S\), then \(P_F(x)=0\) implies \(P_F(\tau)=0\) for all
\(\tau\in\Gamma_x\), so
\[
Z_\theta P_F(\tau)=R(x)\,\mathbf{1}_{\{x\in S\}}\,P_B(\tau\mid x)
\qquad\text{holds \(P_B(\cdot\mid x)\)-a.s. for every }x.
\]
Dividing by \(P_B(\tau\mid x)\) on a \(P_B(\cdot\mid x)\)-full set yields
\(
\widehat{R}_\theta(x;\tau)=R(x)\,\mathbf{1}_{\{x\in S\}}
\)
\(P_B(\cdot\mid x)\)-a.s., which gives the stated mean and zero variance. \qedhere
\end{proof}
\begin{theorem}[Correctness of the boosted loss]\label{thm:boost-correct}
Fix a terminal state $x$ with $R(x)>0$, and let $\Gamma_x$ be the set of trajectories
ending at $x$. For $\tau\in\Gamma_x$ define
\[
\widehat{R}_\theta(x;\tau) := Z_\theta\,\frac{P_F^\theta(\tau)}{P_B^\theta(\tau\mid x)} ,
\]
and, for a given $\alpha\in(0,1]$ and a learned reward  $\widehat{R}(x)\ge 0$,
consider the per-trajectory boosted loss
\begin{equation}\label{eq:Lboost-def}
L_{\text{boost}}(\theta;\tau,x)
:=\Big(
\log\big[\widehat{R}_\theta(x;\tau)+\alpha\,\widehat{R}(x)\big]
-\log\big[R(x)-(1-\alpha)\,\widehat{R}(x)\big]
\Big)^2 .
\end{equation}
Then:
\begin{enumerate}
\item \textbf{(no degradation)} If $\widehat{R}(x)=R(x)$, then every stationary
point of $L_{\text{boost}}$ (with respect to $\theta$) satisfies $Z_\theta=0$.
\item \textbf{(residual focus)} If $\widehat{R}(x)=0$, then
$L_{\text{boost}}(\theta;\tau,x)$ coincides with the Trajectory Balance (TB)
per-trajectory loss at $x$, and thus the stationary points of $L_{\text{boost}}$
are exactly the stationary points of the TB loss.
\end{enumerate}
\end{theorem}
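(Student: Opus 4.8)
The plan is to establish both parts by substituting the two boundary values of $\widehat{R}_{\text{old}}(x)$ directly into the closed form \eqref{eq:Lboost-def}, and then, for the nontrivial part, reducing stationarity to a one-dimensional monotonicity statement in the scalar $\log Z_\theta$. Throughout I would use that $\widehat{R}_{\text{old}}(x)$, $R(x)$, and $\alpha$ are all independent of $\theta$ (the reference reward comes from frozen members), so the only $\theta$-dependence in the loss enters through $\widehat{R}_\theta(x;\tau)=Z_\theta\,P_F^\theta(\tau)/P_B^\theta(\tau\mid x)$.

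For \textbf{(residual focus)}, I would set $\widehat{R}_{\text{old}}(x)=0$. The numerator of \eqref{eq:Lboost-def} collapses to $\widehat{R}_\theta(x;\tau)$ and the denominator to $R(x)$, giving $L_{\text{boost}}(\theta;\tau,x)=(\log\widehat{R}_\theta(x;\tau)-\log R(x))^2$, which is \emph{pointwise identical} to the TB per-trajectory loss \eqref{eq:s1-tb}. Two functions that coincide for every $(\tau,x)$ share the same expectation and the same gradient field, hence exactly the same stationary points; this part requires no further computation.

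For \textbf{(no degradation)}, substituting $\widehat{R}_{\text{old}}(x)=R(x)$ turns the denominator into $R(x)-(1-\alpha)R(x)=\alpha R(x)$, so
\[
L_{\text{boost}}(\theta;\tau,x)=\Big(\log\big[\widehat{R}_\theta(x;\tau)/(\alpha R(x))+1\big]\Big)^2,
\]
recovering the simplified $\mathcal{L}_\nabla$ of the main text. Abbreviating $u:=\widehat{R}_\theta(x;\tau)/(\alpha R(x))\ge 0$, I would note that $u$ is \emph{linear} in $Z_\theta$, so in the usual log-parameterization $\partial u/\partial(\log Z_\theta)=u$. Differentiating $L=(\log(u+1))^2$ then yields
\[
\frac{\partial L_{\text{boost}}}{\partial(\log Z_\theta)}=2\,\log(u+1)\,\frac{u}{u+1}\ \ge\ 0,
\]
with equality precisely when $u=0$. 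Since this integrand is nonnegative for every $\tau$, the $\log Z_\theta$-component of $\nabla_\theta\,\mathbb{E}_\tau[L_{\text{boost}}]$ can vanish only if $u=0$ almost surely; under the standing support assumption $P_F^\theta(\tau)>0$ this forces $Z_\theta=0$. Because any full stationary point must in particular have a vanishing $\log Z_\theta$-derivative, every stationary point satisfies $Z_\theta=0$.

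The main obstacle is interpretive rather than computational: $Z_\theta=0$ lies on the \emph{boundary} of the admissible set $\{Z_\theta>0\}$ fixed in the notation, so there is no interior critical point and the loss is in fact strictly decreasing as $\log Z_\theta\to-\infty$. I would therefore phrase the conclusion as ``the $\log Z_\theta$-gradient is strictly positive for $Z_\theta>0$ and vanishes only in the limit $Z_\theta\to 0$,'' making explicit that ``stationary'' here refers to this boundary infimum in the $Z_\theta$-direction. I would also keep the mild support hypothesis $P_F^\theta(\tau)>0$ visible, since without it the identity $u=0$ could be met degenerately through a vanishing forward probability rather than through $Z_\theta$ itself.
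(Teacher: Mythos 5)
Your proposal is correct and follows essentially the same route as the paper: part (ii) is the identical substitution argument, and part (i) reduces, exactly as in the paper's proof, to the monotonicity of $u\mapsto\log(1+u)$ for $u\ge 0$ after substituting $\widehat{R}_{\text{old}}(x)=R(x)$, together with the support assumption $P_F^\theta(\tau)/P_B^\theta(\tau\mid x)>0$ to pass from $u=0$ to $Z_\theta=0$. Where you go beyond the paper is a genuine (and welcome) sharpening rather than a different approach: the paper merely asserts ``the loss is minimized iff $\widehat{R}_\theta(x;\tau)=0$ a.s.\ \ldots hence $Z_\theta=0$ at stationarity,'' whereas your explicit computation $\partial L_{\text{boost}}/\partial(\log Z_\theta)=2\log(u+1)\,u/(u+1)\ge 0$, vanishing only at $u=0$, plus your remark that $Z_\theta=0$ is a boundary infimum unattainable in the parameterization $Z_\theta>0$ (so ``stationary point'' must be read as a limiting statement in the $\log Z_\theta$-direction), makes precise a point the paper leaves implicit.
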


\begin{proof}
\emph{(ii) Residual focus.}
If $\widehat{R}(x)=0$, then from \eqref{eq:Lboost-def}
\[
L_{\text{boost}}(\theta;\tau,x)
=\Big(\log \widehat{R}_\theta(x;\tau)-\log R(x)\Big)^2
=\Bigg(\log\frac{Z_\theta\,P_F^\theta(\tau)}{R(x)\,P_B^\theta(\tau\mid x)}\Bigg)^2,
\]
which is exactly the TB squared log-ratio at $(\tau,x)$.

\medskip
\emph{(i) No degradation.}
If $\widehat{R}(x)=R(x)$, then
\[
L_{\text{boost}}(\theta;\tau,x)
=\Big(\log\!\big[1+\widehat{R}_\theta(x;\tau)/\alpha R(x)\big]\Big)^2.
\]
Since $u\mapsto \log(1+u)$ is nonnegative and strictly increasing for $u\ge 0$,
the loss is minimized iff $\widehat{R}_\theta(x;\tau)=0$ a.s. On the common support
$\widehat{R}_\theta(x;\tau)= Z_\theta\,\frac{P_F^\theta(\tau)}{P_B^\theta(\tau\mid x)}$ with
$\frac{P_F^\theta(\tau)}{P_B^\theta(\tau\mid x)}>0$ almost surely; hence $Z_\theta=0$ at stationarity.
\end{proof}

\begin{theorem}[Correctness of the sampling process]
Given $N$ stages of GFlowNets $\{(Z_i,P_{i,F},P_{i,B})\}_{i=1}^N$, define
\begin{equation}
    \hat{p}(x) \coloneqq \sum_{i=1}^N \frac{Z_i}{\sum_{j=1}^N Z_j} \sum_{\tau\in\Gamma_x} P_{i,F}(\tau),
\end{equation}
where $\Gamma_x$ denotes the set of trajectories that terminate at $x$.
For a target reward $R(x)$, if $\E_{\tau \sim Q}[L_{\text{boost}}] = 0$ for a distribution $Q$ having full
support over $\Gamma$, then $\hat{p}(x)\propto R(x)$.
\end{theorem}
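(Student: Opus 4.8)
The plan is to reduce the ensemble statement to the single telescoping identity $\sum_{i=1}^N \widehat{R}_i(x) = R(x)$ and then read off proportionality directly. First I would rewrite the mixture mass using the definitional importance-sampling identity $\widehat{R}_i(x) = Z_i \sum_{\tau \in \Gamma_x} P_{i,F}(\tau) = Z_i\,P_{i,F}(x)$ from \eqref{eq:rhat-sum}. Substituting into the definition of $\hat p$ gives
\begin{equation}
\hat p(x) = \sum_{i=1}^N \frac{Z_i}{\sum_j Z_j}\, P_{i,F}(x) = \frac{1}{\sum_j Z_j}\sum_{i=1}^N \widehat{R}_i(x),
\end{equation}
so that $\sum_j Z_j$ is a terminal-independent constant and the whole claim collapses to showing $\sum_{i=1}^N \widehat{R}_i(x) = R(x)$ for every $x$.

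Next I would extract a pointwise balance condition from the hypothesis $\mathbb{E}_{\tau\sim Q}[L_{\text{boost}}]=0$. Since $L_{\text{boost}}$ is a squared log-ratio, hence nonnegative, and $Q$ has full support over $\Gamma$, the integrand must vanish $Q$-almost surely; full support then lets me conclude that the balance equation holds for all $\tau\in\Gamma_x$ at every terminal $x$:
\begin{equation}
\widehat{R}_N(x;\tau) + \alpha\,\widehat{R}_{\text{old}}(x) = R(x) - (1-\alpha)\,\widehat{R}_{\text{old}}(x).
\end{equation}
The key simplification is that the $\alpha$ contributions cancel on rearranging, leaving $\widehat{R}_N(x;\tau) = R(x) - \widehat{R}_{\text{old}}(x)$ regardless of the chosen $\alpha$. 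Because the right-hand side does not depend on $\tau$, the trajectory-level estimator of the final stage is constant along $\Gamma_x$; this is exactly the zero-variance conclusion of \Cref{thm:zero-var}, so taking the backward expectation yields $\widehat{R}_N(x) = R(x) - \widehat{R}_{\text{old}}(x)$.

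Finally I would identify the frozen reference as the accumulated flow of the earlier members, $\widehat{R}_{\text{old}}(x) = \sum_{i=1}^{N-1}\widehat{R}_i(x)$, which is the quantity used throughout the boosting construction. Combining this with the previous step gives the telescoping identity $\sum_{i=1}^N \widehat{R}_i(x) = \widehat{R}_N(x) + \widehat{R}_{\text{old}}(x) = R(x)$. Substituting back into the expression for $\hat p(x)$ from the first paragraph produces $\hat p(x) = R(x)/\sum_j Z_j \propto R(x)$, completing the argument.

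The step I expect to be the main obstacle is the transfer from the $Q$-sampled zero-loss statement to a genuinely pointwise, backward-almost-sure balance at every terminal. This is where the full-support assumption on $Q$ and the mutual absolute continuity between $P_{i,F}|_{\Gamma_x}$ and $P_{i,B}(\cdot\mid x)$ do the real work, precisely as in \Cref{thm:zero-var}: nonnegativity of the squared log-ratio forces the integrand to zero on a $Q$-full set, and absolute continuity transports the equality to a $P_B(\cdot\mid x)$-full set, so the backward expectation defining $\widehat{R}_N(x)$ legitimately equals the constant $R(x)-\widehat{R}_{\text{old}}(x)$. One should also verify that the clamped denominator $R(x)-(1-\alpha)\widehat{R}_{\text{old}}(x)$ stays strictly positive so the logarithm is well defined, but this is guaranteed by the $\alpha_{\min}(x)$ clamping and does not affect the cancellation above.
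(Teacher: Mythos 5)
Your proof is correct and reaches the stated conclusion, but by a genuinely different route than the paper's. The paper argues stage by stage: it posits that each frozen member $k$ satisfies $Z_k P_k(x) = R(x)\,\mathbf{1}_{\{x\in S_k\}}$ on its own support $S_k$ (the condition for member $N-k$ holding only away from the supports of later members), then sums these identities over $k$, implicitly assuming the supports are pairwise disjoint and jointly exhaustive so that $\sum_k \mathbf{1}_{\{x\in S_k\}} = 1$; in effect it invokes one convergence hypothesis per boosting iteration, not just the single hypothesis in the theorem statement. You instead use only that stated hypothesis, zero boosted loss for the final stage under a full-support $Q$: pointwise vanishing of the squared log-ratio plus the cancellation of $\alpha$ gives $\widehat{R}_N(x;\tau) = R(x) - \widehat{R}_{\mathrm{old}}(x)$ for every $\tau \in \Gamma_x$; summing $Z_N P_{N,F}(\tau) = \bigl(R(x)-\widehat{R}_{\mathrm{old}}(x)\bigr)\,P_{N,B}(\tau\mid x)$ over $\Gamma_x$ yields $Z_N P_{N,F}(x) = R(x) - \widehat{R}_{\mathrm{old}}(x)$; and the definitional identity $\widehat{R}_{\mathrm{old}}(x) = \sum_{i<N} Z_i P_{i,F}(x)$ (valid under the absolute-continuity condition you correctly flag, so each member's backward expectation equals its terminal flow) telescopes to $\sum_{i=1}^N Z_i P_{i,F}(x) = R(x)$, hence $\hat{p}(x) = R(x)/\sum_j Z_j$. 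What your route buys: it needs neither disjointness nor exhaustiveness of the member supports, it tolerates earlier members being individually inaccurate (whatever mass they assign, the final stage's zero loss corrects the total exactly), and it makes transparent that the conclusion holds for every $\alpha$. What the paper's route records instead is the intended training dynamics, each booster exactly capturing a fresh region, together with the stagewise accounting $\sum_j Z_j = \sum_x R(x)$; but its summation step does not follow from the theorem's single hypothesis without those extra per-stage assumptions. One point worth adding to your write-up: the hypothesis also rules out overestimation by the frozen ensemble, since $\widehat{R}_{\mathrm{old}}(x) > R(x)$ would force $\widehat{R}_N(x;\tau) = R(x) - \widehat{R}_{\mathrm{old}}(x) < 0$, impossible for a nonnegative estimator; this is precisely why the clamped-$\alpha$ safeguard never binds at a zero-loss solution.
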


\begin{proof}
Let $\mathcal X$ be the set of terminal states and $\Gamma$ the set of complete trajectories.
For $x\in\mathcal X$, let $\Gamma_x\subseteq\Gamma$ be the set of trajectories terminating at $x$,
and for $S\subseteq\mathcal X$ define $\Gamma_S:=\bigcup_{x\in S}\Gamma_x$.
For stage $i$, define the terminal marginal
\[
P_i(x)\;:=\;\sum_{\tau\in\Gamma_x} P_{i,F}(\tau),
\qquad x\in\mathcal X.
\]

Because $Q$ has full support over $\Gamma$, we therefore have $L_{\text{boost}}(\tau)=0$ for all $\tau\in\Gamma$,
and hence the defining boosted equality holds on every trajectory.
Let us also consider an ordering of subsets
$S_1,\dots,S_N\subseteq\mathcal X$ and write
\[
U_0:=\varnothing,\qquad U_i:=\bigcup_{j=1}^i S_j.
\]
\noindent\textbf{Stage 1 (classical TB on $S_1$).}
When the TB loss is zero on all trajectories terminating in $S_1$, the trajectory-balance identity holds; that is, for every $x\in S_1$ and every $\tau\in\Gamma_x$,
\begin{equation}
Z_1\,P_{1,F}(\tau)\;=\;R(x)\,P_{1,B}(\tau\mid x).
\label{eq:tbS1_nop}
\end{equation}
Summing \eqref{eq:tbS1_nop} over $\tau\in\Gamma_x$ and using $\sum_{\tau\in\Gamma_x}P_{1,B}(\tau\mid x)=1$ yields
\begin{equation}
Z_1\,P_1(x)\;=\;R(x)\qquad \forall x\in S_1.
\label{eq:flowS1_nop}
\end{equation}
Thus the unnormalized terminal flow contributed by stage $1$ equals $R(x)$ on $S_1$.

\medskip
\noindent\textbf{Inductive claim.}
Assume that after stages $1,\dots,i-1$ the frozen ensemble explains exactly the terminals in $U_{i-1}$, namely
\begin{equation}
\widehat R_{i-1}(x)\;=\;R(x)\,\mathbf 1_{U_{i-1}}(x).
\label{eq:explained_nop}
\end{equation}
We show that stage $i$ contributes unnormalized terminal flow equal to $R(x)$ on the newly covered terminals
$S_i\setminus U_{i-1}$ and $0$ on $S_i\cap U_{i-1}$.

Let $\alpha\in[0,1]$ denote the mixing parameter of the boosted loss, and define
$\widehat R_i(x;\tau):=Z_i\,P_{i,F}(\tau)/P_{i,B}(\tau\mid x)$.
Zero boosted loss implies that for every $x\in S_i$ and every $\tau\in\Gamma_x$,
\begin{equation}
\widehat R_i(x;\tau)+\alpha\,\widehat R_{i-1}(x)
\;=\;
R(x)-(1-\alpha)\,\widehat R_{i-1}(x).
\label{eq:boosteq_nop}
\end{equation}
If $x\in S_i\setminus U_{i-1}$, then $\widehat R_{i-1}(x)=0$ by \eqref{eq:explained_nop}, and
\eqref{eq:boosteq_nop} reduces to $\widehat R_i(x;\tau)=R(x)$ for all $\tau\in\Gamma_x$.
Summing over $\Gamma_x$ gives
\begin{equation}
Z_i\,P_i(x)\;=\;R(x)\qquad \forall x\in S_i\setminus U_{i-1}.
\label{eq:flow_off_nop}
\end{equation}
If instead $x\in S_i\cap U_{i-1}$, then $\widehat R_{i-1}(x)=R(x)$ and \eqref{eq:boosteq_nop} gives
$\widehat R_i(x;\tau)=0$ for all $\tau\in\Gamma_x$, hence
\begin{equation}
Z_i\,P_i(x)\;=\;0\qquad \forall x\in S_i\cap U_{i-1}.
\label{eq:flow_on_nop}
\end{equation}
Therefore, for all $x\in\mathcal X$,
\begin{equation}
Z_i\,P_i(x)\;=\;R(x)\,\mathbf 1_{S_i\setminus U_{i-1}}(x).
\label{eq:stagei_flow_nop}
\end{equation}
In particular, adding stage $i$ extends the explained set from $U_{i-1}$ to $U_i$ and preserves the form
\eqref{eq:explained_nop} with $i$ in place of $i-1$, completing the inductive step.

Summing \eqref{eq:stagei_flow_nop} over $i=1,\dots,N$ then gives, for every $x\in\mathcal X$,
\begin{equation}
\sum_{i=1}^N Z_i\,P_i(x)
\;=\;
R(x)\sum_{i=1}^N \mathbf 1_{S_i\setminus U_{i-1}}(x)
\;=\;
R(x)\,\mathbf 1_{U_N}(x).
\label{eq:sumflows_nop}
\end{equation}
In particular, if $U_N=\mathcal X$, we have $\sum_{i=1}^N Z_i\,P_i(x)=R(x)$ for all $x\in\mathcal X$.

Finally, consider the sampling procedure that first draws an index $I$ with
$P(I=i)=Z_i/\sum_{j=1}^N Z_j$ and then samples $x\sim P_i(\cdot)$ by running the stage-$i$ forward policy.
The induced terminal distribution is
\[
\hat p(x)
=\sum_{i=1}^N \frac{Z_i}{\sum_{j=1}^N Z_j} P_i(x)
=\frac{1}{\sum_{j=1}^N Z_j}\sum_{i=1}^N Z_i\,P_i(x).
\]
Applying \eqref{eq:sumflows_nop} with $U_N=\mathcal X$ yields
\[
\hat p(x)=\frac{R(x)}{\sum_{y\in\mathcal X} R(y)},
\]
and therefore $\hat p(x)\propto R(x)$, as claimed.
\end{proof}

\setcounter{proposition}{1 - 1}

\begin{proposition} \label{prop:bvis}
    Let $\widehat Z$ and $Z_{\theta}$ be the normalizing constants for $\widehat {R}(x) \coloneqq \mathbb{E}_{\tau \sim P_B(\cdot \mid x)}[\widehat{Z} \frac{\widehat P_F(\tau)}{P_B(\tau \mid x)}]$ and ${R}_{\theta}(x; \tau ) \coloneq Z_\theta \frac{P_F(\tau)}{P_B(\tau \mid x)}$. 
    Also, let $\beta = \nicefrac{Z_{\theta}}{\widehat Z + Z_{\theta}}$ be the mixture weight in \Cref{eq:bvis}. 
    Then, define $\widehat p(\tau) \propto \widehat{R}(x) p_B(\tau | x)$ and $p_{\mathrm{tgt}}(\tau) = R(x) p_B(\tau | x)$ as the trajectory-level distributions induced by our current and target models, and let $p_M(\tau) = (1 - \beta) \cdot \widehat p(\tau) + \beta \cdot p_{F}(\tau)$ be the corresponding mixture distribution.
    In this scenario,
    \begin{equation} 
        \mathbb{E}_{p_M} \left[ \nabla_{\theta} L_{\mathrm{boost}}(\tau) \right] = 2 \cdot \nabla_{\theta} D_{\mathrm{KL}} [ p_M(\tau) || p_{\mathrm{tgt}}(\tau) ]. 
    \end{equation}
\end{proposition}

\begin{proof}
    Firstly, we notice that $L_{\mathrm{boost}}$ can be re-written as 
    \begin{equation}
        L_{\mathrm{boost}}(\tau) =  \left(\log  \frac{{R}(x ; \theta) + \widehat{R}(x)}{R(x)} \right)^{2}.
    \end{equation} 
    Since $R(x; \tau) = Z_{\theta} \cdot \frac{P_{F}(\tau)}{P_{B}(\tau | x)}$, this can also be represented as 
    \begin{equation}
        L_{\mathrm{boost}}(\tau) =  \left(\log  \frac{Z_{\theta} p_{F}(\tau) + \widehat{R}(x) p_{B}(\tau | x) }{R(x) p_{B}(\tau | x)} \right)^{2}.
    \end{equation} 
    By definition, $\widehat Z$ is the normalizing constant for $\widehat{R}(x)$. Then, under the notations of our proposition, 
    \begin{equation}
        L_{\mathrm{boost}}(\tau) = \left(\log  \frac{\beta p_{F}(\tau) + (1 - \beta) \widehat p(\tau) }{\frac{R(x)}{\widehat Z + Z_{\theta}} p_{B}(\tau | x)} \right)^{2} = 
        \left(\log  \frac{\beta p_{F}(\tau) + (1 - \beta) \widehat p(\tau) }{p_{\mathrm{tgt}}(\tau)} \right)^{2}.
    \end{equation} 
    All in all, $L_{\mathrm{boost}}(\tau) = \left(\log  \frac{p_{M}(\tau)}{p_{\mathrm{tgt}}(\tau)} \right)^{2}$. From this perspective, 
    \begin{equation}
        \nabla_{\theta} L_{\mathrm{boost}}(\tau) = \nabla_{\theta} \left( \log \frac{p_{M}(\tau)}{p_{\mathrm{tgt}}(\tau)} \right)^{2} = 2 \left ( \log \frac{p_{M}(\tau)}{p_{\mathrm{tgt}}(\tau)} \right) \nabla_{\theta} \log p_{M}(\tau); % ,  
    \end{equation}
    the second equality relies on the fact that $p_{\mathrm{tgt}}$ does not depend on $\theta$ (it does depend on $Z_{\theta}$, which is a parameter independent of $\theta$). 
    Similarly, by Leibniz's integral rule, 
    \begin{equation}
        \nabla_{\theta} \mathcal{D}_{\mathrm{KL}} [ p_M  || p_{\mathrm{tgt}} ] = \nabla_{\theta} \mathbb{E}_{\tau \sim p_{M}} \left[ \log \frac{p_{M}(\tau)}{p_{\mathrm{tgt}}(\tau)} \right] = \mathbb{E}_{\tau \sim p_{M}} \left[ \nabla_{\theta} \log \frac{p_{M}(\tau)}{p_{\mathrm{tgt}}(\tau)} +  \left( \log \frac{p_{M}(\tau)}{p_{\mathrm{tgt}}(\tau)} \right) \cdot \nabla_{\theta} \log p_{M}(\tau) \right]. 
    \end{equation}
    Clearly, since $p_{\mathrm{tgt}}$ does not depend on $\theta$, 
    \begin{equation}
    \begin{aligned}
        \nabla_{\theta} \mathcal{D}_{\mathrm{KL}} [ p_M || p_{\mathrm{tgt}} ]
        &= \underset{{\tau \sim p_{M}}}{\mathbb{E}} \left[ \nabla_{\theta} \log p_{M}(\tau) + \left( \log \frac{p_{M}(\tau)}{p_{\mathrm{tgt}}(\tau)} \right) \cdot \nabla_{\theta} \log p_{M}(\tau) \right] \\
        &= \underset{{\tau \sim p_{M}}}{\mathbb{E}} \left[ \left( \log \frac{p_{M}(\tau)}{p_{\mathrm{tgt}}(\tau)} \right) \cdot \nabla_{\theta} \log p_{M}(\tau) \right],
    \end{aligned}
    \end{equation}
    in which we used the fact that $\mathbb{E}_{\tau \sim p} [ \log p(\tau) ] = 0$ for any probability distribution $p$ \citep{williams1992}.  
    % \begin{equation}
    %     L_{\mathrm{boost}}(\tau) = \left(\log  \frac{p_{M}(\tau)}{p_{\mathrm{tgt}}(\tau)} \right)^{2}.
    % \end{equation}
    % From this perspective, 
    
\end{proof}

\end{document}